\documentclass{article}
\usepackage{iclr2027_conference,times}
\usepackage{amsmath,amssymb,amsthm,bm,booktabs,array,graphicx,microtype,placeins,float}
\usepackage{algorithm}
\usepackage{algpseudocode}
\usepackage{hyperref}
\usepackage{url}
\usepackage{caption}
\usepackage{wrapfig}
\hypersetup{hidelinks}
\renewcommand{\eqref}[1]{(\ref{#1})}

\newtheorem{theorem}{Theorem}
\newtheorem{proposition}[theorem]{Proposition}
\newtheorem{corollary}[theorem]{Corollary}

\theoremstyle{remark}

\DeclareMathOperator{\tr}{tr}

\newcommand{\Risk}{\mathcal{R}}
\newcommand{\Value}{\mathcal{V}}

\title{Training-Aware Target Coverage for \\ Synthetic Data Selection}

\author{
Yang Ba$^{1}$, Michelle V. Mancenido$^{2}$ \& Rong Pan$^{1}$\\
$^{1}$School of Computing and Augmented Intelligence, Arizona State University\\
$^{2}$School of Mathematical and Natural Sciences, Arizona State University\\
\texttt{\{yangba,mmanceni,Rong.Pan\}@asu.edu}
}

\iclrfinalcopy

\fancypagestyle{firstpagefooter}{
    \fancyhf{}
    \fancyfoot[L]{Preprint. Under review.}
    \renewcommand{\headrulewidth}{0pt}
    \renewcommand{\footrulewidth}{0pt}
}

\begin{document}
\maketitle
\thispagestyle{firstpagefooter}

\begin{abstract}

Synthetic data are increasingly used to scale LLM training, yet more synthetic data do not necessarily produce better models. Useful synthetic data must add information relevant to the target task without introducing errors that offset their benefit, and the value of an example can change as the training set grows. We develop a linear theory that characterizes this tradeoff and determines where synthetic data are useful, how much should be added, and the marginal value of adding one example to an existing set. The analysis shows the conditions when input coverage alone is sufficient and when synthetic errors must also be considered. Guided by these results, we introduce \emph{Training-Aware Target Coverage} (TATC), a synthetic data selection method for LLM fine-tuning. TATC identifies candidates whose training effects are beneficial to the target task and selects among them to expand coverage of target-relevant directions not already represented by the available data. Experiments on text and image data verify the linear theory. With mathematical reasoning tasks, TATC selects synthetic solutions for fine-tuning Qwen2.5-Math-1.5B-Instruct and outperforms alternative synthetic-data selection methods on GSM8K across selection budgets. In summary, we provide a principled approach to synthetic data selection by quantifying and maximizing its value to the target task. The code is available at https://github.com/YangBa78/Training-Aware-Target-Coverage-for-Synthetic-Data-Selection. 

\end{abstract}

\section{Introduction}
Synthetic data have become an integral part of large language model training, spanning multiple stages of the training pipeline, from pretraining and instruction tuning to reasoning distillation and model evaluation~\citep{gunasekar2023textbooks,wang2023selfinstruct,zelikman2022star}. Models can generate large volumes of synthetic data, such as mathematical solutions, code, instructions, critiques, and preference feedback, with little additional human annotation, making it a scalable way to construct training data. This scalability, however, does not imply that more synthetic data are always better. The value of synthetic examples can differ substantially: adding redundant examples may provide little new information, while examples containing errors can actively harm downstream model performance; these effects can become particularly consequential as synthetic data are incorporated at larger scales or reused across generations of models~\citep{shumailov2024collapse,alemohammad2024mad,yi2026verification}. The central question is therefore no longer simply how to generate more synthetic data, but \textit{which synthetic examples are useful for training, and how many of them should be incorporated into the training dataset?}

Existing approaches to synthetic data selection emphasize complementary properties of training examples. Quality-based methods filter generated responses using correctness or verification signals and model-based quality estimates~\citep{chen2024alpagasus,liu2024deita}, while diversity-based methods favor collections that span distinct regions of representation or model-induced gradient space~\citep{friedman2023vendi,jung2025prismatic}. Other approaches select examples by matching the synthetic and target distributions or by optimizing information-based design objectives~\citep{ash2021bait,rezaei2026highdim}. A separate line of work estimates training utility more directly through gradient similarity or influence on a target objective~\citep{xia2024less,fan2026optimsyn}. These perspectives capture different aspects of useful synthetic data, but they do not by themselves answer the question regarding a synthetic example's contribution to the existing training set and its impact on the target task. A synthetic input may be able to add information in an underrepresented area of data distribution, but its generated response  may introduce a harmful training effect at the same time. Conversely, an example with a favorable training effect may add little information if its local area is already well represented by the real data or previously selected synthetic examples. This motivates us to develop a synthetic data selection method that jointly considers each example's training effect and its marginal information contribution.

We first develop a linear theory to understand when and how much synthetic data are beneficial for learning. Rather than treating each synthetic example as having merely an intrinsic value, the theory characterizes the value of a selected synthetic set through the tradeoff between the information it adds and the training errors it introduces. This perspective reveals three key properties of synthetic data selection. First, value is \emph{directional}: synthetic data are most useful when they add information in directions that matter for the target task but are insufficiently represented by the existing data. Second, value depends on \emph{amount}: along a given direction, the useful amount of synthetic data can be zero, finite, or continue to increase, depending on the balance between additional information and error. Third, value is \emph{marginal}: the benefit of adding a candidate depends on what the real data and previously selected synthetic examples already provide. Together, these results explain why neither data quality nor coverage alone is sufficient to determine which synthetic examples, or how many of them, should be used. We experiment on text and image data to validate these predictions: accounting for synthetic error distinguishes beneficial examples from harmful examples, while the theory also identifies the amount of synthetic data that can be most useful.

Guided by these results, we introduce \textbf{Training-Aware Target Coverage (TATC)}, a practical method for synthetic data selection in LLM fine-tuning. TATC asks two complementary questions for each candidate: whether training on the synthetic example is expected to benefit the target task, and whether its input adds information in target-relevant directions that are not yet well represented. Using a small labeled real probe set, TATC first retains candidates with beneficial training effects and then selects those that provide complementary target coverage. Coverage is evaluated relative to the real data and previously selected synthetic examples, allowing selection to adapt as the training set grows rather than scoring each candidate in isolation. We evaluate TATC by fine-tuning Qwen2.5-Math-1.5B-Instruct~\citep{yang2024qwenmath} with synthetic mathematical solutions and evaluating on GSM8K~\citep{cobbe2021gsm8k}. Our approach improves exact-match accuracy over both the real data checkpoint and other competing selection methods across a range of selection budgets.

Our contributions can be summarized below:
\begin{itemize}

\item We develop a theoretical framework for characterizing the value of synthetic data, showing how downstream benefit depends jointly on the information added by synthetic inputs and the errors they introduce. The analysis shows \emph{where} synthetic data are useful across feature directions, \emph{how much} data should be added, and \emph{why} the value of each additional example depends on the existing training dataset.

\item Guided by this theory, we introduce \textbf{Training-Aware Target Coverage (TATC)}, a synthetic data selection method in LLM fine-tuning. TATC identifies examples with beneficial training effects and selects candidates that provide complementary coverage of target-relevant directions, updating their value as the selected set grows.

\item We empirically validate the proposed theory on text and image data and show that TATC outperforms existing synthetic data selection methods on GSM8K mathematical reasoning.

\end{itemize}

\label{sec:introduction}

\section{Related Work}
\paragraph{Synthetic inputs and generated response error.}
Recent pipelines construct synthetic training data through rationale distillation, instruction evolution, mathematical problem synthesis, error-driven generation, and diversified instruction generation~\citep{hsieh2023distilling,xu2024wizardlm,yu2024metamath,wang2023synthesize,li2024glan,xu2025magpie,ge2024personahub}; synthetic augmentation has also been studied for calibration and generalization~\citep{ba2024fillgaps}. Recursive synthetic training can amplify model errors~\citep{shumailov2024collapse,alemohammad2024mad}, while verification can mitigate degradation when generated responses can be checked~\citep{yi2026verification}. High-dimensional regression analyses identify covariance shift as a factor in synthetic-data utility and motivate covariance matching~\citep{rezaei2026highdim}. We study selection from a candidate pool of LLM-generated input-response pairs, jointly accounting for input information, systematic response error, and excess response noise.

\paragraph{Quality and validation-guided data selection.}
AlpaGasus filters instruction data using model-based quality scores~\citep{chen2024alpagasus}, while DEITA combines complexity, quality, and diversity~\citep{liu2024deita}. GLISTER selects subsets for a held-out validation objective~\citep{killamsetty2021glister}, and RHO-LOSS prioritizes examples by reducible loss~\citep{mindermann2022rholoss}. Our setting instead separates the predicted training effect of a complete example from the set-dependent target coverage of its features.

\paragraph{Diversity and experimental design.}
The Vendi Score measures set diversity through the spectrum of a similarity matrix~\citep{friedman2023vendi}; G-Vendi applies the same idea to model-induced gradients for synthetic reasoning data generation~\citep{jung2025prismatic}. These methods reward dispersion within a set. In contrast, experimental design targets reductions in estimation error; BAIT makes a Fisher inverse-trace criterion practical for neural active learning~\citep{ash2021bait}. Our target coverage term follows this design principle and is evaluated as an ablation rather than a new objective. Unlike active learning, however, synthetic data selection observes a generated response rather than acquiring a trusted label after selecting an input. Because that response can be systematically wrong, its value must also depend on response error.

\paragraph{Gradient and influence selection.}
Influence functions and Data Shapley scores quantify the contribution of individual training examples~\citep{koh2017influence,ghorbani2019datashapley}. LESS uses optimizer-aware gradient similarity for targeted instruction tuning~\citep{xia2024less}, while OptimSyn uses estimated training influence to guide synthetic-data generation~\citep{fan2026optimsyn}. SPICE combines a Fisher-information objective with gradient conflict to select informative and compatible instruction data~\citep{chang2026spice}. Our analysis instead asks how input information and generated-response error jointly determine the marginal value of adding a synthetic example to the existing training set.

\label{sec:related-work}

\section{Downstream Risk Theory in Synthetic Data Selection}
\label{sec:methodology}

\subsection{Risk of Training with Synthetic Data}

We study when synthetic data improve downstream performance by accounting for both the information provided by synthetic inputs and errors in their labels. Let \(\bm\phi(X)\in\mathbb R^d\) denote a feature representation of input \(X\), let \(Y\) denote the corresponding label, and consider a linear downstream model
$
f_{\bm\theta}(X)=\bm\phi(X)^\top\bm\theta, \bm\theta\in\mathbb R^d.
$
We assume that there exists a true parameter \(\bm\theta^\star\in\mathbb R^d\) such that
$$
\mathbb E[Y\mid X]=\bm\phi(X)^\top\bm\theta^\star.
$$
For the real and synthetic samples, let
$
\{(\bm x_{r,i}, y_{r,i})\}_{i=1}^{n},
\{(\bm x_{s,j}, y_{s,j})\}_{j=1}^{m},
$
denote their feature-label pairs, where
\(\bm x_{r,i},\bm x_{s,j}\in\mathbb R^d\) are obtained from the representation \(\bm\phi\) defined above (i.e., $\bm x=\bm\phi(X)$).

Let
$$
\bm X_r
=
[\bm x_{r,1},\ldots,\bm x_{r,n}]^\top
\in\mathbb R^{n\times d},
\qquad
\bm X_s
=
[\bm x_{s,1},\ldots,\bm x_{s,m}]^\top
\in\mathbb R^{m\times d}
$$
be feature matrices. We model their labels as
\[
\bm y_r=\bm X_r\bm\theta^\star+\bm\varepsilon_r,
\qquad
\bm y_s=\bm X_s\bm\theta^\star+\bm d_s+\bm\varepsilon_s,
\]
where \(\bm d_s\in\mathbb R^m\) represents systematic error in the synthetic labels. Conditional on the feature matrices, \(\bm\varepsilon_r\) and \(\bm\varepsilon_s\) are independent noises with zero means and covariances \(\bm\Omega_r\) and \(\bm\Omega_s\), respectively.

We estimate \(\bm\theta\) from the combined real and synthetic data by least squares:
\[
\widehat{\bm\theta}_s
=
\arg\min_{\bm\theta\in\mathbb R^d}
\left\{
\frac12\|\bm X_r\bm\theta-\bm y_r\|_2^2
+
\frac12\|\bm X_s\bm\theta-\bm y_s\|_2^2
\right\}.
\]
We evaluate the fitted model on a downstream distribution \(P_{\mathrm{test}}\) and define its second moment matrix as
$
\bm Q
:=
\mathbb E_{X\sim P_{\mathrm{test}}}
\left[
\bm\phi(X)\bm\phi(X)^\top
\right]
\succeq 0.
$
We define the expected squared error relative to the conditional mean on \(P_{\mathrm{test}}\) as the \emph{downstream risk}:
$$
\Risk_Q(\widehat{\bm\theta}_s)
:=
\mathbb E_{X\sim P_{\mathrm{test}}}
\left[
\left(
\bm\phi(X)^\top
(\widehat{\bm\theta}_s-\bm\theta^\star)
\right)^2
\right]
=
(\widehat{\bm\theta}_s-\bm\theta^\star)^\top
\bm Q
(\widehat{\bm\theta}_s-\bm\theta^\star).
$$
To characterize the downstream risk, define
\[
\bm A
=
\bm X_r^\top\bm X_r+\bm X_s^\top\bm X_s,
\qquad
\bm u
=
\bm X_s^\top\bm d_s,
\qquad
\bm N
=
\bm X_r^\top\bm\Omega_r\bm X_r
+
\bm X_s^\top\bm\Omega_s\bm X_s.
\]
Here, \(\bm A\) is the feature Gram matrix, \(\bm u\) represents the contribution of systematic label error, and \(\bm N\) represents the contribution of random label noise.

\begin{proposition}[Downstream risk decomposition]
\label{prop:master-risk}
Assume that \(\bm A\) is nonsingular. Conditional on \((\bm X_r,\bm X_s)\), under the noise assumptions above,
\begin{equation}
\boxed{
\mathbb E\!\left[
\Risk_Q(\widehat{\bm\theta}_s)
\mid
\bm X_r,\bm X_s
\right]
=
\underbrace{
(\bm A^{-1}\bm u)^\top\bm Q(\bm A^{-1}\bm u)
}_{\text{bias from systematic label error}}
+
\underbrace{
\operatorname{tr}\!\left(
\bm Q\bm A^{-1}\bm N\bm A^{-1}
\right)
}_{\text{variance from label noise}}
}
\label{eq:master-risk}
\end{equation}
\end{proposition}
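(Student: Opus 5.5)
The plan is to write $\widehat{\bm\theta}_s-\bm\theta^\star$ explicitly in terms of the label errors, split it into a deterministic part and a mean-zero part, and then take the conditional expectation of the resulting quadratic form. Nonsingularity of $\bm A$ makes the least-squares objective strictly convex. Its normal equations therefore have the unique solution
\[
\widehat{\bm\theta}_s=\bm A^{-1}\bigl(\bm X_r^\top\bm y_r+\bm X_s^\top\bm y_s\bigr).
\]
Substituting the label models $\bm y_r=\bm X_r\bm\theta^\star+\bm\varepsilon_r$ and $\bm y_s=\bm X_s\bm\theta^\star+\bm d_s+\bm\varepsilon_s$, the terms in $\bm\theta^\star$ combine into $\bm A\bm\theta^\star$. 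This gives
\[
\widehat{\bm\theta}_s-\bm\theta^\star=\bm A^{-1}\bm u+\bm A^{-1}\bm e,\qquad \bm e:=\bm X_r^\top\bm\varepsilon_r+\bm X_s^\top\bm\varepsilon_s,
\]
where $\bm u=\bm X_s^\top\bm d_s$. I treat $\bm d_s$ as fixed given the features; this is implicit in the statement, since the bias term contains no expectation over $\bm d_s$.

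Next I expand the risk $(\bm b+\bm v)^\top\bm Q(\bm b+\bm v)$ with $\bm b=\bm A^{-1}\bm u$ and $\bm v=\bm A^{-1}\bm e$. This produces three pieces: $\bm b^\top\bm Q\bm b$, the cross term $2\bm b^\top\bm Q\bm v$, and $\bm v^\top\bm Q\bm v$. Conditional on $(\bm X_r,\bm X_s)$, the noises have zero mean, so $\mathbb E[\bm e\mid\bm X_r,\bm X_s]=\bm 0$. The cross term is linear in $\bm e$ with conditionally fixed coefficients, so its conditional expectation vanishes. The first piece is deterministic and is exactly the bias term in \eqref{eq:master-risk}.

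For the variance term I use the trace trick: $\bm v^\top\bm Q\bm v=\operatorname{tr}(\bm Q\bm v\bm v^\top)$, and taking expectations inside the trace gives $\operatorname{tr}\bigl(\bm Q\bm A^{-1}\mathbb E[\bm e\bm e^\top\mid\cdot]\,\bm A^{-1}\bigr)$, using that $\bm A$ is symmetric. The only step needing care is computing $\mathbb E[\bm e\bm e^\top\mid\bm X_r,\bm X_s]$. Expanding $\bm e\bm e^\top$ gives two diagonal blocks and two cross blocks such as $\bm X_r^\top\bm\varepsilon_r\bm\varepsilon_s^\top\bm X_s$. By the assumed conditional independence and zero means of $\bm\varepsilon_r$ and $\bm\varepsilon_s$, the cross blocks have zero expectation. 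The diagonal blocks give $\bm X_r^\top\bm\Omega_r\bm X_r+\bm X_s^\top\bm\Omega_s\bm X_s=\bm N$. Substituting yields $\operatorname{tr}(\bm Q\bm A^{-1}\bm N\bm A^{-1})$, which completes \eqref{eq:master-risk}.

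I expect no real obstacle here. The main thing to get right is the bookkeeping of what is conditioned on: $\bm d_s$ and the features are treated as fixed, which is what makes the cross term vanish and leaves the bias term free of any expectation.
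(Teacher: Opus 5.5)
Your proposal is correct and matches the paper's proof step for step: you solve the normal equations, write $\widehat{\bm\theta}_s-\bm\theta^\star=\bm A^{-1}\bm u+\bm A^{-1}(\bm X_r^\top\bm\varepsilon_r+\bm X_s^\top\bm\varepsilon_s)$, remove the cross term by conditional zero mean, and evaluate the quadratic term through the trace identity with conditional covariance $\bm A^{-1}\bm N\bm A^{-1}$. The only cosmetic difference is that you compute $\mathbb E[\bm e\bm e^\top\mid\bm X_r,\bm X_s]=\bm N$ blockwise, whereas the paper states the covariance of $\bm A^{-1}\bm e$ directly from the noise assumptions.
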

Proposition~\ref{prop:master-risk} gives a fundamental decomposition used throughout this analysis. Synthetic inputs affect downstream risk through the information matrix \(\bm A\), while errors in synthetic labels affect the risk through both systematic error and random variation. Thus, the benefit of adding synthetic data cannot be decided by input coverage alone. See proof in Appendix~\ref{app:master-proof}.

\begin{figure*}[t]
\centering
\includegraphics[width=.98\textwidth]{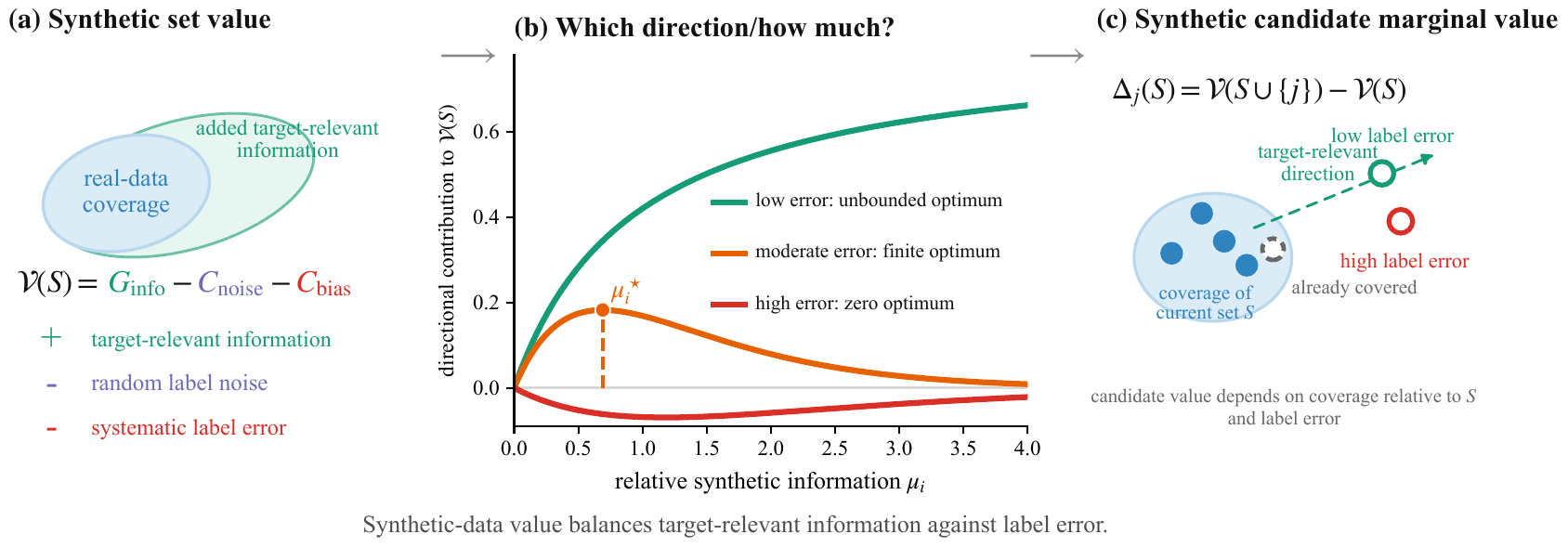}
\caption{\textbf{Risk and value of synthetic data.} The theory in one view. (a) A synthetic set is valuable when target-relevant input information outweighs random label noise and systematic label error. (b) This tradeoff determines whether the optimal amount along a direction is zero, finite, or unbounded. (c) Candidate value depends on both input coverage relative to the current set and label error.}
\label{fig:theory-overview}
\end{figure*}

\subsection{Value of Synthetic Data}

We now move from analyzing a given synthetic dataset to selecting examples from a synthetic candidate pool. Let
$
\mathcal P_s=\{(\bm x_j,y_j)\}_{j=1}^N
$
denote the candidate pool, and let \(S\subseteq\{1,\ldots,N\}\) index a selected synthetic subset. We treat \(S\) as fixed with respect to the realized label noise and use \(\bm X_S,\bm y_S,\bm d_S,\) and \(\bm\Omega_S\) for the corresponding quantities restricted to \(S\). We define the value of \(S\) as the reduction in expected downstream risk relative to training on the real data alone:
$$
\Value(S)
:=
\mathbb E[
\Risk_Q(\widehat{\bm\theta}_r)
\mid\bm X_r]
-
\mathbb E[
\Risk_Q(\widehat{\bm\theta}_S)
\mid\bm X_r,\bm X_S].
$$
Thus, \(\Value(S)>0\) means that adding \(S\) improves expected downstream performance. To separate the sources of this added value, we consider isotropic label noise,
$$
\bm\Omega_r=\sigma^2\bm I,
\qquad
\bm\Omega_S=(\sigma^2+\tau^2)\bm I,
$$
where \(\tau^2\ge0\) is the excess random noise in the synthetic labels. Let
$$
\bm A_r=\bm X_r^\top\bm X_r\succ0,
\qquad
\bm A_S=\bm X_S^\top\bm X_S,
\qquad
\bm A=\bm A_r+\bm A_S
$$

\begin{corollary}[Value of a selected synthetic set]
\label{cor:set-value}
Under the specifications above,
\begin{equation}
\boxed{
\begin{aligned}
\Value(S)={}&
\underbrace{\sigma^2\tr[\bm Q(\bm A_r^{-1}-\bm A^{-1})]}_{G_{\rm info}(S)}
-\underbrace{\tau^2\tr(\bm Q\bm A^{-1}\bm A_S\bm A^{-1})}_{C_{\rm noise}(S)}\\
&-\underbrace{(\bm A^{-1}\bm X_S^\top\bm d_S)^\top
\bm Q(\bm A^{-1}\bm X_S^\top\bm d_S)}_{C_{\rm bias}(S)} .
\end{aligned}}
\label{eq:set-value}
\end{equation}
\end{corollary}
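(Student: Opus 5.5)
The plan is to apply Proposition~\ref{prop:master-risk} twice, once to the real-only estimator and once to the augmented estimator, and then subtract. For the real-only fit, take the synthetic block to be empty, so that \(\bm A=\bm A_r\), \(\bm u=\bm 0\) and \(\bm N=\bm X_r^\top(\sigma^2\bm I)\bm X_r=\sigma^2\bm A_r\). The bias term then vanishes, and the variance term becomes
\[
\tr(\bm Q\bm A_r^{-1}\sigma^2\bm A_r\bm A_r^{-1})=\sigma^2\tr(\bm Q\bm A_r^{-1}).
\]
The hypothesis \(\bm A_r\succ0\) is exactly the nonsingularity required by the proposition. For the augmented fit, \(\bm A=\bm A_r+\bm A_S\) is nonsingular because it is at least \(\bm A_r\succ0\) in the Loewner order. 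Here \(\bm u=\bm X_S^\top\bm d_S\), so the bias term is already \(C_{\rm bias}(S)\).

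The one step requiring care is the variance term of the augmented fit. Under the isotropic specification,
\[
\bm N=\sigma^2\bm X_r^\top\bm X_r+(\sigma^2+\tau^2)\bm X_S^\top\bm X_S=\sigma^2\bm A+\tau^2\bm A_S.
\]
Substituting this into \(\tr(\bm Q\bm A^{-1}\bm N\bm A^{-1})\) and using linearity of the trace gives
\[
\sigma^2\tr(\bm Q\bm A^{-1})+\tau^2\tr(\bm Q\bm A^{-1}\bm A_S\bm A^{-1}),
\]
where the first piece uses \(\bm A^{-1}\bm A\bm A^{-1}=\bm A^{-1}\). This splitting of \(\bm N\) into a multiple of \(\bm A\) plus the excess-noise part \(\tau^2\bm A_S\) is the key observation. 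Once it is in place, the rest is bookkeeping.

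Finally, subtract the augmented risk from the real-only risk. The terms \(\sigma^2\tr(\bm Q\bm A_r^{-1})-\sigma^2\tr(\bm Q\bm A^{-1})\) combine by linearity into \(G_{\rm info}(S)\), and the remaining two terms enter with negative signs as \(C_{\rm noise}(S)\) and \(C_{\rm bias}(S)\). This is exactly \eqref{eq:set-value}.

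The only subtlety is conditioning. Because \(S\) is treated as fixed with respect to the realized noise, conditioning on \((\bm X_r,\bm X_S)\) leaves the real-only expectation unchanged: \(\widehat{\bm\theta}_r\) depends on \(\bm\varepsilon_r\) alone, and the synthetic noise is independent of it. So the two conditional expectations in the definition of \(\Value(S)\) may legitimately be subtracted.
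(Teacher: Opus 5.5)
Your proof is correct and follows the paper's argument essentially step for step: the paper likewise writes $\bm N=\sigma^2\bm A+\tau^2\bm A_S$, records the real-only risk as $\sigma^2\tr(\bm Q\bm A_r^{-1})$, applies Proposition~\ref{prop:master-risk} with $\bm u=\bm X_S^\top\bm d_S$, and subtracts. On your closing conditioning remark, the cleanest justification is that the conditional real-only risk $\sigma^2\tr(\bm Q\bm A_r^{-1})$ is a function of $\bm X_r$ alone, so by the tower property conditioning on $\bm X_r$ or on $(\bm X_r,\bm X_S)$ gives the same value.
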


Equation~\eqref{eq:set-value} expresses the value of a selected synthetic set as a tradeoff between the information provided by its inputs and errors in its labels. \(G_{\mathrm{info}}(S)\) measures the information relevant to the downstream task that is added beyond the real data, \(C_{\mathrm{noise}}(S)\) is the downstream cost of excess random label noise, and \(C_{\mathrm{bias}}(S)\) is the downstream cost induced by systematic error in the synthetic labels. Thus, adding \(S\) improves downstream performance when its information gain exceeds the two error terms. Figure~\ref{fig:theory-overview} summarizes this tradeoff and the directional and candidate-level results developed below. See Appendix~\ref{app:set-value-proof} for the proof.

\paragraph{Which directions are useful?}
To understand where the value in Equation~\eqref{eq:set-value} comes from, we compare the information provided by the selected synthetic inputs with what was already provided by the real data. By spectral analysis, define
\begin{equation*}
\bm A_r^{-1/2}\bm A_S\bm A_r^{-1/2}
=\bm R\operatorname{diag}(\mu_1,\ldots,\mu_d)\bm R^\top,
\quad
\bm D=\bm A_r^{-1/2}\bm Q\bm A_r^{-1/2},
\end{equation*}
where \(\bm R=[\bm r_1,\ldots,\bm r_d]\) is orthogonal. Let
$
w_i=\bm r_i^\top\bm D\bm r_i.
$
The eigenvalue \(\mu_i\ge0\) measures the synthetic information added along direction \(\bm r_i\) relative to the real data, while \(w_i\ge0\) measures the downstream relevance of that direction.

\begin{proposition}[Directional value of synthetic information]
\label{prop:modes}
Assume \(\bm A_r\succ0\). The information gain and excess noise terms in Corollary~\ref{cor:set-value} decompose as
$$
G_{\mathrm{info}}(S)
=
\sigma^2\sum_{i=1}^d
w_i\frac{\mu_i}{1+\mu_i},
\qquad
C_{\mathrm{noise}}(S)
=
\tau^2\sum_{i=1}^d
w_i\frac{\mu_i}{(1+\mu_i)^2}.
$$
These decompositions hold without further assumptions. If, in addition, the systematic label error has the form
$
\bm d_S=\bm X_S\bm\delta,
\bm\delta\in\mathbb R^d,
$
and \(\bm R^\top\bm D\bm R\) is diagonal, define
$
\bm\eta
=
\bm R^\top\bm A_r^{1/2}\bm\delta.
$
Then the full set value decomposes across directions:
\begin{equation}
\Value(S)
=
\sum_{i=1}^d
w_i
\left[
\sigma^2\frac{\mu_i}{1+\mu_i}
-
\tau^2\frac{\mu_i}{(1+\mu_i)^2}
-
\eta_i^2
\left(\frac{\mu_i}{1+\mu_i}\right)^2
\right].
\label{eq:mode-value}
\end{equation}
\end{proposition}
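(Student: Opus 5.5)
The proof works in whitened coordinates relative to the real data. Set $\bm M=\bm A_r^{-1/2}\bm A_S\bm A_r^{-1/2}=\bm R\bm\Lambda\bm R^\top$ with $\bm\Lambda=\operatorname{diag}(\mu_1,\ldots,\mu_d)$. Then
\[
\bm A=\bm A_r^{1/2}(\bm I+\bm M)\bm A_r^{1/2},
\qquad
\bm A^{-1}=\bm A_r^{-1/2}\bm R(\bm I+\bm\Lambda)^{-1}\bm R^\top\bm A_r^{-1/2}.
\]
Every term of Corollary~\ref{cor:set-value} will be rewritten in this form. The cyclic property of the trace moves the outer factors $\bm A_r^{-1/2}$ onto $\bm Q$, turning it into $\bm D$, and then $\bm R$ is absorbed into $\bm R^\top\bm D\bm R$, whose diagonal entries are exactly the $w_i$.

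\textbf{Information term.} Since $\bm A_r^{-1}-\bm A^{-1}=\bm A_r^{-1/2}\bm R[\bm I-(\bm I+\bm\Lambda)^{-1}]\bm R^\top\bm A_r^{-1/2}$ and $\bm I-(\bm I+\bm\Lambda)^{-1}=\operatorname{diag}(\mu_i/(1+\mu_i))$, we get
\[
G_{\rm info}=\sigma^2\tr\bigl(\bm R^\top\bm D\bm R\,\operatorname{diag}(\mu_i/(1+\mu_i))\bigr).
\]
The trace of a matrix times a diagonal matrix involves only the diagonal entries of the first factor. This gives $\sigma^2\sum_i w_i\mu_i/(1+\mu_i)$ with no assumption on off-diagonal structure.

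\textbf{Noise term.} Write $\bm A_S=\bm A_r^{1/2}\bm R\bm\Lambda\bm R^\top\bm A_r^{1/2}$. Then
\[
\bm A^{-1}\bm A_S\bm A^{-1}=\bm A_r^{-1/2}\bm R(\bm I+\bm\Lambda)^{-1}\bm\Lambda(\bm I+\bm\Lambda)^{-1}\bm R^\top\bm A_r^{-1/2}.
\]
The middle factor is the diagonal matrix $\operatorname{diag}(\mu_i/(1+\mu_i)^2)$, so the same diagonal-only argument gives $C_{\rm noise}$.

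\textbf{Bias term.} With $\bm d_S=\bm X_S\bm\delta$ we have $\bm u=\bm A_S\bm\delta$, and
\[
\bm A^{-1}\bm A_S\bm\delta=\bm A_r^{-1/2}\bm R(\bm I+\bm\Lambda)^{-1}\bm\Lambda\bm R^\top\bm A_r^{1/2}\bm\delta=\bm A_r^{-1/2}\bm R\,\bm P\bm\eta,
\]
where $\bm P=\operatorname{diag}(\mu_i/(1+\mu_i))$. The bias cost is therefore the quadratic form $\bm\eta^\top\bm P(\bm R^\top\bm D\bm R)\bm P\bm\eta$. This is the only place the diagonality assumption is needed. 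A quadratic form, unlike a trace against a diagonal matrix, picks up off-diagonal cross terms $\eta_i\eta_j$. Once $\bm R^\top\bm D\bm R=\operatorname{diag}(w_i)$, the cost reduces to $\sum_i w_i\eta_i^2(\mu_i/(1+\mu_i))^2$.

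Subtracting the three pieces as in Corollary~\ref{cor:set-value} yields \eqref{eq:mode-value}. The main point requiring care is the bias step, where the cross terms must be handled. Elsewhere the only thing to check is that $\bm A_r^{1/2}$ and $\bm A_r^{-1/2}$ are well defined and symmetric, which holds because $\bm A_r\succ0$, so the factorization of $\bm A$ is legitimate. The rest is routine algebra.
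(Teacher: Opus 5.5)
Your proof is correct and follows the paper's argument essentially step for step. It uses the same whitened factorization $\bm A=\bm A_r^{1/2}(\bm I+\bm M)\bm A_r^{1/2}$ and the same observation that the information and noise traces depend only on the diagonal entries $w_i$ of $\bm R^\top\bm D\bm R$. It also reduces the bias cost to the quadratic form $\bm\eta^\top\bm P(\bm R^\top\bm D\bm R)\bm P\bm\eta$, the only step where the diagonality assumption is needed, exactly as the paper does.
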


Equation~\eqref{eq:mode-value} simplifies the tradeoff for the set in Equation~\eqref{eq:set-value} direction by direction. Each direction contributes according to the information added by the synthetic input, its downstream relevance, and the random and systematic errors in the synthetic label.

\begin{corollary}[Beneficial directions]
\label{cor:beneficial-direction}
Under the conditions of Proposition~\ref{prop:modes}, a direction with
\(\mu_i>0\) and \(w_i>0\) contributes positively to \(\Value(S)\) if and only if
$$
\eta_i^2
<
\sigma^2
+
\frac{\sigma^2-\tau^2}{\mu_i}.
$$
\end{corollary}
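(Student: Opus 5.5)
This follows directly from the per-direction decomposition in Equation~\eqref{eq:mode-value} of Proposition~\ref{prop:modes}. Under its hypotheses, $\Value(S)$ is a sum over $i$ of $w_i$ times the bracketed term. So ``direction $i$ contributes positively'' means exactly that $w_i$ times its bracket is strictly positive. Since $w_i>0$ by assumption, the plan is to divide by $w_i$ and analyze the sign of the bracket
\[
b_i:=\sigma^2\frac{\mu_i}{1+\mu_i}-\tau^2\frac{\mu_i}{(1+\mu_i)^2}-\eta_i^2\Big(\frac{\mu_i}{1+\mu_i}\Big)^2 .
\]

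The key step is to factor out a common positive quantity. Because $\mu_i>0$, the factor $\mu_i/(1+\mu_i)^2$ is strictly positive. Pulling it out gives
\[
b_i=\frac{\mu_i}{(1+\mu_i)^2}\Big[\sigma^2(1+\mu_i)-\tau^2-\eta_i^2\mu_i\Big].
\]
Hence $b_i>0$ if and only if $\eta_i^2\mu_i<\sigma^2(1+\mu_i)-\tau^2$. Dividing by $\mu_i>0$ then gives $\eta_i^2<\sigma^2+(\sigma^2-\tau^2)/\mu_i$, which is the stated condition. Every step here is an equivalence, because we only multiply or divide by strictly positive quantities. That gives both directions of the ``if and only if'' at once.

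The only point needing care is the interpretation. ``Contributes positively'' should be read as the summand $w_i b_i$ being strictly positive. This relies on the decomposition in Equation~\eqref{eq:mode-value} being exact under the diagonality assumption on $\bm R^\top\bm D\bm R$ and the form $\bm d_S=\bm X_S\bm\delta$, which Proposition~\ref{prop:modes} supplies. Beyond checking that the factorization is correct, there is no real obstacle.
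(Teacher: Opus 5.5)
Your proposal is correct and follows essentially the same route as the paper: the paper multiplies the bracketed directional term by the positive quantity \((1+\mu_i)^2/\mu_i\), which is the same as factoring out \(\mu_i/(1+\mu_i)^2\) as you do. It then reaches the condition \(\sigma^2(1+\mu_i)-\tau^2-\eta_i^2\mu_i>0\) and divides by \(\mu_i>0\), exactly as in your argument.
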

Thus, a direction relevant to the downstream task is not necessarily beneficial. Its information gain must be sufficient to offset the random and systematic errors associated with the synthetic labels. See Appendix~\ref{app:beneficial-proof} for the proof.

\begin{corollary}[Optimal amount of synthetic information]
\label{cor:optimal-amount}
Under the conditions of Proposition~\ref{prop:modes}, let \(w_i>0\). If
$$
\sigma^2>\tau^2,\qquad
2\eta_i^2>\sigma^2+\tau^2,
$$
the directional contribution is maximized at
\begin{equation}
\mu_i^\star
=
\frac{\sigma^2-\tau^2}
{2\eta_i^2-\sigma^2-\tau^2}.
\label{eq:amount-optimum}
\end{equation}
Otherwise, the optimum lies at a boundary: \(\mu_i^\star=0\) when
\(\sigma^2<\tau^2\) and \(\eta_i^2\ge\sigma^2\), and the supremum is approached as \(\mu_i\to\infty\) in the complementary nondegenerate regime. Equality and degenerate cases are given in Appendix~\ref{app:amount-proof}.
\end{corollary}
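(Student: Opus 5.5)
The plan is to reduce the per-direction contribution to a quadratic in a single bounded variable. By Proposition~\ref{prop:modes}, the contribution of direction \(i\) is \(w_i f(\mu_i)\) with
\[
f(\mu)=\sigma^2\frac{\mu}{1+\mu}-\tau^2\frac{\mu}{(1+\mu)^2}-\eta_i^2\Bigl(\frac{\mu}{1+\mu}\Bigr)^2 .
\]
Since \(w_i>0\), it suffices to optimize \(f\) over \(\mu\ge0\). I will substitute \(t=\mu/(1+\mu)\), which is a strictly increasing bijection from \([0,\infty)\) onto \([0,1)\). Because \(1-t=1/(1+\mu)\), we have \(\mu/(1+\mu)^2=t(1-t)\). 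Hence
\[
f=g(t):=(\sigma^2-\tau^2)\,t+(\tau^2-\eta_i^2)\,t^2 ,
\]
and optimizing \(f\) over \(\mu\ge0\) is the same as optimizing the quadratic \(g\) over \(t\in[0,1)\), then mapping back by \(\mu=t/(1-t)\).

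\textbf{Interior case.} Setting \(g'(t)=0\) gives \(t^\star=(\sigma^2-\tau^2)/\bigl(2(\eta_i^2-\tau^2)\bigr)\). I will check three things under \(\sigma^2>\tau^2\) and \(2\eta_i^2>\sigma^2+\tau^2\).
\begin{itemize}
\item \(g\) is strictly concave. This needs \(\eta_i^2>\tau^2\), which follows because \(2\eta_i^2>\sigma^2+\tau^2>2\tau^2\).
\item \(t^\star>0\), which holds since \(\sigma^2>\tau^2\).
\item \(t^\star<1\). This is equivalent to \(\sigma^2-\tau^2<2\eta_i^2-2\tau^2\), i.e.\ \(2\eta_i^2>\sigma^2+\tau^2\).
\end{itemize}
So \(t^\star\) is the unique maximizer on \([0,1)\). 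Mapping back,
\[
\mu^\star=\frac{t^\star}{1-t^\star}=\frac{\sigma^2-\tau^2}{2\eta_i^2-2\tau^2-(\sigma^2-\tau^2)},
\]
which simplifies to \eqref{eq:amount-optimum}.

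\textbf{Boundary cases.} I will write \(g(t)=t\,h(t)\) with \(h(t)=(\sigma^2-\tau^2)+(\tau^2-\eta_i^2)t\), which is affine in \(t\).
\begin{itemize}
\item If \(\sigma^2<\tau^2\) and \(\eta_i^2\ge\sigma^2\), then \(h(0)<0\) and \(h(1)=\sigma^2-\eta_i^2\le0\). Since \(h\) is affine, \(h<0\) on \([0,1)\). Therefore \(g(t)<0=g(0)\) for \(t>0\), and the optimum is \(\mu^\star=0\).
\item In the complementary regime, I will show that \(g'(t)=(\sigma^2-\tau^2)+2(\tau^2-\eta_i^2)t\) is positive on \([0,1)\). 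Since \(g'\) is affine, it suffices to check \(g'(0)=\sigma^2-\tau^2\ge0\) and \(g'(1)=\sigma^2+\tau^2-2\eta_i^2\ge0\), with at least one of them strict. Then \(g\) is strictly increasing, so the supremum is approached only as \(t\to1\), i.e.\ \(\mu\to\infty\), and it equals \(\sigma^2-\eta_i^2\).
\end{itemize}

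\textbf{Main obstacle.} The delicate part is the case bookkeeping: making precise which sign combinations of \(\sigma^2-\tau^2\), \(2\eta_i^2-\sigma^2-\tau^2\) and \(\eta_i^2-\sigma^2\) fall into each regime. Two points need care here:
\begin{itemize}
\item the mixed case \(\sigma^2<\tau^2\) with \(\eta_i^2<\sigma^2\), where \(h\) changes sign and the comparison between \(g(0)=0\) and \(\lim_{t\to1}g(t)=\sigma^2-\eta_i^2>0\) decides the answer;
\item the equality cases, such as \(\sigma^2=\tau^2\) or \(2\eta_i^2=\sigma^2+\tau^2\), where \(g\) is linear or flat.
\end{itemize}
I will handle these by tabulating the signs of \(g'(0)\) and \(g'(1)\) together with the concavity sign \(\tau^2-\eta_i^2\), and defer the equality and degenerate cases to the appendix, as the statement indicates.
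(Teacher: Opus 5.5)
Your argument is correct, and it takes a slightly different route from the paper. The paper differentiates $f$ directly in $\mu$ and gets $f'(\mu)=[(\sigma^2-\tau^2)+\mu(\sigma^2+\tau^2-2\eta_i^2)]/(1+\mu)^3$. It reads off the interior maximizer from the single sign change of this linear numerator. For the boundary cases it uses the factorization $f(\mu)=\mu[(\sigma^2-\tau^2)+\mu(\sigma^2-\eta_i^2)]/(1+\mu)^2$, together with $f(0)=0$ and $f(\mu)\to\sigma^2-\eta_i^2$.

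Your substitution $t=\mu/(1+\mu)$ is equivalent: $f'(\mu)=g'(t)/(1+\mu)^2$, and the paper's factorization is exactly your $t\,h(t)$. What it buys is transparency. The contribution becomes an honest quadratic on $[0,1)$ with explicit curvature sign $\tau^2-\eta_i^2$. The interior case reduces to ``the vertex of a concave parabola lies in $(0,1)$,'' and the condition $2\eta_i^2>\sigma^2+\tau^2$ appears naturally as $t^\star<1$. The paper's version avoids the change of variables and stays in the variable $\mu$ in which the corollary is stated.

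One step needs tightening. Your plan for the complementary regime is to show $g'>0$ on $[0,1)$, and this fails in the mixed case $\sigma^2<\tau^2$, $\eta_i^2<\sigma^2$, where $g'(0)=\sigma^2-\tau^2<0$. This is not an equality or degenerate case. It belongs to the ``complementary nondegenerate regime'' of the statement, so it cannot be deferred to the appendix along with the equality cases.

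You already named the right fix, and it takes one sentence. In this case $\tau^2-\eta_i^2>0$, so $g$ is strictly convex. Also $g'(1)=\sigma^2+\tau^2-2\eta_i^2>0$, since $2\eta_i^2<2\sigma^2<\sigma^2+\tau^2$. Hence $g$ first dips below $0$ and then increases to $\sigma^2-\eta_i^2>0=g(0)$. The supremum is therefore approached only as $t\to1$, i.e.\ $\mu\to\infty$, and is not attained. This matches the paper's treatment of that subcase, where the derivative changes sign once from negative to positive.

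With that added, and the genuine equality cases ($\sigma^2=\tau^2$ with $\eta_i^2\ge\sigma^2$) handled as in the appendix, your proof is complete.
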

Corollary~\ref{cor:optimal-amount} shows that more synthetic information is not always beneficial: the optimal amount along a direction can be zero, finite, or unbounded. Equation~\eqref{eq:amount-optimum} varies \(\mu_i\) while keeping \(w_i\) and \(\eta_i\) unchanged, and therefore characterizes the optimal amount along a given direction rather than which synthetic examples should be selected.

The directional analysis above assumes that the synthetic label error can be assessed. This requirement cannot generally be met in directions that are not covered by the real data. It also assumes \(\bm A_r\succ0\), so the real data identify every modeled direction. We next consider the complementary case in which a downstream-relevant direction is not covered by the real training data.

\begin{proposition}[When synthetic label error is not identifiable]
\label{prop:nonidentifiability}
Let \(\bm v\in\mathbb R^d\) satisfy
$
\bm X_r\bm v=\bm0,
\bm X_S\bm v\neq\bm0,
\bm v^\top\bm Q\bm v>0.
$
Without additional labeled information identifying
\(\bm v^\top\bm\theta^\star\), the observations cannot distinguish
\((\bm\theta^\star,\bm d_S)\) from
\((\bm\theta^\star+c\bm v,\bm d_S-c\bm X_S\bm v)\).
\end{proposition}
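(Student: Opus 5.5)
The plan is to show that the two parameter pairs induce exactly the same joint distribution of all observed quantities, namely the real labels $\bm y_r$ and the synthetic labels $\bm y_S$, conditional on the feature matrices $(\bm X_r,\bm X_S)$. Under the model of Section~\ref{sec:methodology}, these labels are
\[
\bm y_r=\bm X_r\bm\theta^\star+\bm\varepsilon_r,
\qquad
\bm y_S=\bm X_S\bm\theta^\star+\bm d_S+\bm\varepsilon_S .
\]
Here the noise vectors $(\bm\varepsilon_r,\bm\varepsilon_S)$ have a law that does not depend on $(\bm\theta^\star,\bm d_S)$. It therefore suffices to check that the \emph{mean} of each observation block is unchanged under the substitution
\[
\bm\theta^\star\mapsto\bm\theta^\star+c\bm v,
\qquad
\bm d_S\mapsto\bm d_S-c\bm X_S\bm v .
\]
Equality of the full conditional laws then follows, because the observations are the mean plus the same noise.

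The first step handles the real block. Its new mean is $\bm X_r(\bm\theta^\star+c\bm v)=\bm X_r\bm\theta^\star$, since $\bm X_r\bm v=\bm 0$ by hypothesis.

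The second step handles the synthetic block. Its new mean is
\[
\bm X_S(\bm\theta^\star+c\bm v)+\bm d_S-c\bm X_S\bm v=\bm X_S\bm\theta^\star+\bm d_S .
\]
The shift in the regression function is exactly cancelled by the reparametrized systematic error. So for every $c\in\mathbb R$ the pair produces identical data distributions, and no estimator or test based only on $(\bm X_r,\bm y_r,\bm X_S,\bm y_S)$ can separate the two pairs.

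The third step shows the two hypotheses genuinely differ on the quantity that matters. The conditional mean along $\bm v$ changes by $c\,\bm v^\top\bm v\neq0$ whenever $c\neq0$ (since $\bm X_S\bm v\neq\bm0$ forces $\bm v\neq\bm 0$). The downstream risk of any fitted $\widehat{\bm\theta}$ also differs: by expanding the quadratic, $\Risk_Q$ evaluated against $\bm\theta^\star+c\bm v$ differs from that against $\bm\theta^\star$ by a term containing $c^2\bm v^\top\bm Q\bm v>0$, which is nonzero for large enough $|c|$. The synthetic error vector $c\bm X_S\bm v\neq\bm0$ differs as well. This is where the hypotheses $\bm X_S\bm v\neq\bm 0$ and $\bm v^\top\bm Q\bm v>0$ are used: they guarantee that the indistinguishable alternatives are distinct in their synthetic error and consequential downstream.

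The only step needing care is making precise what "cannot distinguish" means, together with the caveat about extra labeled information. I would phrase it as equality of the conditional likelihood of the observed data, and note that the invariance breaks only if additional observations depend on $\bm v^\top\bm\theta^\star$ directly, which is exactly what the statement excludes. The rest is direct substitution.
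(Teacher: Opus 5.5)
Your proposal is correct and follows essentially the same route as the paper. The real mean is unchanged because $\bm X_r\bm v=\bm 0$, the shift $c\bm X_S\bm v$ cancels in the synthetic mean, the noise law is fixed so the observed-data laws coincide, and $\bm v^\top\bm Q\bm v>0$ makes the difference matter downstream; the paper also gives an optional zero-noise, one-direction example where the two equivalent worlds give $\Value(S)$ of opposite sign, which the statement does not require.
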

Thus, when a downstream-relevant direction is absent from the real training data, synthetic labels alone cannot distinguish target signal from systematic label error. Additional labeled real data covering such directions can resolve this ambiguity; Section~\ref{sec:dl} uses a real probe set for this purpose.

\subsection{Marginal Value of a Synthetic Example}
\label{sec:marginal}

The value in Equation~\eqref{eq:set-value} determines whether a selected synthetic set is beneficial as a whole. We now return to the general risk formulation and ask how this value changes when one candidate is added to the current set \(S\). For a synthetic candidate \(j\notin S\), define
$$
\Delta_j(S)
:=
\Value(S\cup\{j\})-\Value(S).
$$
Thus, \(\Delta_j(S)>0\) means that adding candidate \(j\) further reduces expected downstream risk.
Let \((\bm A,\bm u,\bm N)\) denote the information, error, and noise quantities defined before Proposition~\ref{prop:master-risk}, now formed from the real data and selected set \(S\). Let \(\bm x_j\), \(\delta_j\), and \(\omega_j\) denote the candidate's feature vector, systematic label error, and random label noise variance, respectively, with its noise independent of the real and selected-sample noises. Define
$$
\bm b=\bm A^{-1}\bm u,\qquad
\bm p_j=\bm A^{-1}\bm x_j,\qquad
c_j=1+\bm x_j^\top\bm p_j,\qquad
e_j=\delta_j-\bm x_j^\top\bm b,\qquad
q_j=\bm p_j^\top\bm Q\bm p_j.
$$
Here, \(\bm b\) is the bias induced by the selected set, \(e_j\) is the candidate's systematic error relative to that bias, and \(q_j\) measures the downstream relevance of its induced direction.

\begin{theorem}[Conditional marginal value]
\label{thm:rank-one-marginal}
The marginal value of candidate \(j\) is
\begin{equation}
\boxed{\begin{aligned}
\Delta_j(S)={}&
\frac{2\bm p_j^\top\bm Q\bm A^{-1}\bm N\bm p_j}{c_j}
-\frac{q_j(\bm p_j^\top\bm N\bm p_j+\omega_j)}{c_j^2}\\[-2pt]
&-\frac{2e_j\bm b^\top\bm Q\bm p_j}{c_j}
-\frac{e_j^2q_j}{c_j^2}.
\end{aligned}}
\label{eq:rank-one-marginal}
\end{equation}
\end{theorem}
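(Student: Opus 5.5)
Since the real-data risk cancels in the difference, \(\Delta_j(S)\) equals the expected risk under \(S\) minus the expected risk under \(S\cup\{j\}\). We apply Proposition~\ref{prop:master-risk} to both configurations and track how \((\bm A,\bm u,\bm N)\) change when candidate \(j\) is appended. Adding row \(\bm x_j\) with systematic error \(\delta_j\) and independent noise variance \(\omega_j\) gives
\[
\bm A'=\bm A+\bm x_j\bm x_j^\top,\qquad
\bm u'=\bm u+\delta_j\bm x_j,\qquad
\bm N'=\bm N+\omega_j\bm x_j\bm x_j^\top .
\]
The noise covariance of the augmented sample is block diagonal because the candidate's noise is independent. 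Since \(\bm A\succ0\) (as in Proposition~\ref{prop:master-risk}), the Sherman--Morrison formula applies:
\[
\bm A'^{-1}=\bm A^{-1}-\frac{\bm p_j\bm p_j^\top}{c_j}.
\]
We then compute the bias and variance terms separately.

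\textbf{Bias term.} We compute \(\bm b'=\bm A'^{-1}\bm u'\). First, \(\bm A'^{-1}\bm u=\bm b-\bm p_j(\bm x_j^\top\bm b)/c_j\). Second, \(\bm A'^{-1}\bm x_j=\bm p_j-\bm p_j(c_j-1)/c_j=\bm p_j/c_j\). Therefore
\[
\bm b'=\bm b+\frac{\delta_j-\bm x_j^\top\bm b}{c_j}\,\bm p_j=\bm b+\frac{e_j}{c_j}\bm p_j .
\]
Expanding the quadratic form,
\[
\bm b'^\top\bm Q\bm b'-\bm b^\top\bm Q\bm b=\frac{2e_j\,\bm b^\top\bm Q\bm p_j}{c_j}+\frac{e_j^2q_j}{c_j^2}.
\]
Both of these appear with a minus sign in \(\Delta_j\), which gives the last two terms of \eqref{eq:rank-one-marginal}.

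\textbf{Variance term.} Write \(\bm M=\bm A^{-1}\) and \(\bm M'=\bm M-\bm p_j\bm p_j^\top/c_j\). We need \(\tr(\bm Q\bm M\bm N\bm M)-\tr(\bm Q\bm M'\bm N'\bm M')\).

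First expand \(\bm M'\bm N\bm M'\). Using cyclicity of the trace and the symmetry of \(\bm Q,\bm M,\bm N\), the two cross terms each contribute \(\bm p_j^\top\bm Q\bm M\bm N\bm p_j/c_j\). Together they give \(-2\bm p_j^\top\bm Q\bm A^{-1}\bm N\bm p_j/c_j\) inside the new trace. The quadratic piece contributes \((\bm p_j^\top\bm N\bm p_j)\,q_j/c_j^2\).

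Next, the added noise term \(\omega_j\bm M'\bm x_j\bm x_j^\top\bm M'\) equals \(\omega_j\bm p_j\bm p_j^\top/c_j^2\), reusing \(\bm M'\bm x_j=\bm p_j/c_j\). Its trace against \(\bm Q\) is \(\omega_jq_j/c_j^2\).

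Taking old minus new gives
\[
\frac{2\bm p_j^\top\bm Q\bm A^{-1}\bm N\bm p_j}{c_j}-\frac{q_j(\bm p_j^\top\bm N\bm p_j+\omega_j)}{c_j^2},
\]
which matches the first two terms of \eqref{eq:rank-one-marginal}.

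The main care point is the trace bookkeeping in the variance cross terms. One must check that \(\tr(\bm Q\bm p_j\bm p_j^\top\bm N\bm M)\) and \(\tr(\bm Q\bm M\bm N\bm p_j\bm p_j^\top)\) are both equal to \(\bm p_j^\top\bm Q\bm M\bm N\bm p_j\). This holds by transposition, since the matrices involved are symmetric. The remaining point is to use \(\bm M'\bm x_j=\bm p_j/c_j\) consistently; it is what collapses every candidate-dependent term to multiples of \(\bm p_j\). Summing the bias and variance differences gives the stated formula for \(\Delta_j(S)\).
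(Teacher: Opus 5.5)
Your proposal is correct and follows the paper's proof essentially step for step. You use the same Sherman--Morrison update of $(\bm A,\bm u,\bm N)$, the same bias recursion $\bm b_+=\bm b+(e_j/c_j)\bm p_j$, and the same split of the variance change into the existing-noise trace expansion plus the candidate's own term $\omega_j q_j/c_j^2$; your explicit symmetry check of the two cross terms is just the bookkeeping the paper compresses into ``cyclic invariance of the trace.''
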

The first two terms describe the change in the variance component of downstream risk, while the last two describe the impact of systematic label error and its interaction with the bias already induced by \(S\). Therefore, the value of a candidate is conditional on the selected set: the same candidate can have different marginal value depending on the examples already included.

A useful special case isolates the contribution of input coverage. If
$
\bm\Omega_r=\bm\Omega_S=\sigma^2\bm I,
\bm b=\bm0,
\delta_j=0,
\omega_j=\sigma^2,
$
then
\begin{equation}
\Delta_j(S)=\frac{\sigma^2q_j}{c_j}.
\label{eq:coverage-marginal}
\end{equation}
The more general specializations with systematic label error or excess random noise are given in Appendix~\ref{app:rank-one-proof}. Equation~\eqref{eq:coverage-marginal} isolates the coverage relevant to the downstream task that is contributed by the candidate input. Theorem~\ref{thm:rank-one-marginal} shows that this coverage term is sufficient only when labels are clean; more generally, candidate value also depends on label error and its interaction with the current bias. This insight motivates the development of TATC in Section~\ref{sec:dl}.

\begin{figure}[t]
\centering
\includegraphics[width=.99\textwidth]{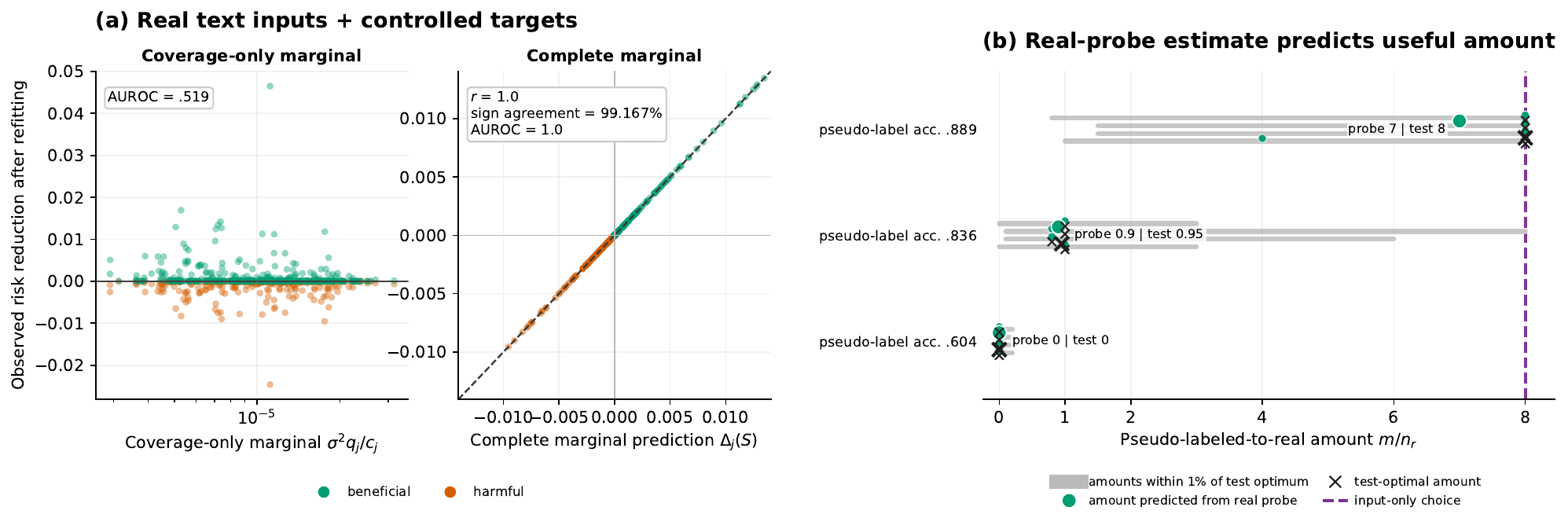}
\caption{\textbf{Candidate value and synthetic-data amount with real inputs and controlled targets.}
\textbf{(a)} Candidate-level test on SetFit/subj. Coverage alone does not distinguish beneficial from harmful candidates, whereas the complete marginal closely predicts the risk reduction from explicit refitting.
\textbf{(b)} Amount selection with ResNet-50 features on CIFAR-10. The amount predicted from a labeled real probe closely tracks the test-optimal amount and decreases as pseudo-label reliability falls, while the input-only rule selects the full candidate pool. Gray segments indicate amounts within 1\% of the test minimum.}
\label{fig:exact-regime}
\end{figure}

Figure~\ref{fig:exact-regime} tests two implications of the linear theory using semi-synthetic constructions that keep candidate inputs real while changing their labels. This lets us examine label error without changing input information, which would change simultaneously if both inputs and labels were generated. In panel (a), controlled label errors are assigned to SetFit/subj texts~\citep{pang2004sentimental}; coverage alone fails to distinguish beneficial from harmful candidates, whereas the complete marginal closely predicts the risk reduction observed after refitting with each candidate. In panel (b), the same CIFAR-10~\citep{krizhevsky2009cifar} candidate images receive labels from models with different error levels; the probe-estimated useful amount closely tracks the test optimum as label error increases, while the input-only criterion selects the full candidate pool. Together, these tests show that changing labels alone can change both which examples are useful and how much data should be used. Section~\ref{sec:dl} extends this distinction to nonlinear fine-tuning; details are in Appendix~\ref{app:exact-tests}.

\section{Training-Aware Target Coverage in Supervised Fine-tuning}
\label{sec:dl}

In this section, we translate the linear analysis in Section~\ref{sec:methodology} into a practical synthetic-data selection method for nonlinear models. The theory shows that input coverage alone is insufficient because synthetic responses can introduce error. Using LLM fine-tuning as a concrete setting, we combine target coverage with a local estimate of each complete synthetic example's training effect.

We start from a model checkpoint \(\bm \theta_0\) obtained by fine-tuning on real data. Let
$
\mathcal D_{\mathrm{syn}}
=
\{(x_j,y_j)\}_{j=1}^{N}
$
denote the synthetic candidate pool.
Let
$
\bm\phi_{\bm \theta_0}(x)\in\mathbb R^p
$
denote the representation of input \(x\) extracted at \(\bm \theta_0\), and write
\(\bm\phi_j=\bm\phi_{\bm \theta_0}(x_j)\).
We use the real training set \(\mathcal D_r\) together with a disjoint real probe set \(\mathcal D_q\). The real training set determines the checkpoint and initial representation geometry, while the probe set provides target-task inputs and labels for measuring downstream relevance and the predicted effect of synthetic examples.

\paragraph{Target coverage.}
We first measure which representation directions are covered by the real training data and which directions occur in the labeled probe set. Using the representations defined above, let
\[
\begin{gathered}
\widehat{\bm A}_0 = \rho\bm I + \sum_{x\in\mathcal D_r}
\bm\phi_{\bm \theta_0}(x)\bm\phi_{\bm \theta_0}(x)^\top,
\qquad
\widehat{\bm Q} = \frac{1}{|\mathcal D_q|}
\sum_{x\in\mathcal D_q}
\bm\phi_{\bm \theta_0}(x)\bm\phi_{\bm \theta_0}(x)^\top.
\end{gathered}
\]
Here, \(\rho>0\) ensures numerical stability of the empirical Gram matrix.
For a selected set of candidate indices \(S\subseteq\{1,\ldots,N\}\), define
\[
\begin{gathered}
\widehat{\bm A}_S
=
\widehat{\bm A}_0
+
\sum_{j\in S}\bm\phi_j\bm\phi_j^\top.
\end{gathered}
\]
For a candidate \(j\notin S\), its marginal coverage gain is
\[
\begin{gathered}
a_j(S)
=
\frac{
\bm\phi_j^\top \widehat{\bm A}_S^{-1}
\widehat{\bm Q}
\widehat{\bm A}_S^{-1}\bm\phi_j
}{
1+\bm\phi_j^\top\widehat{\bm A}_S^{-1}\bm\phi_j
}.
\end{gathered}
\]
By Sherman--Morrison, \(a_j(S)\) is exactly the reduction in \(\tr(\widehat{\bm Q}\widehat{\bm A}_S^{-1})\) from adding \(x_j\), which we call its \emph{target coverage}.
It measures how much candidate input \(x_j\) adds coverage in directions represented by the real probe set but not already covered by the existing training data. Because \(a_j(S)\) depends only on \(x_j\), it does not account for the effect of its synthetic response \(y_j\).

\paragraph{Predicted local training effect.}
Input coverage alone does not determine whether a synthetic example is useful: two candidates with similarly informative inputs can induce different parameter updates because their synthetic responses differ. We estimate this effect by predicting how the candidate's optimizer update changes loss on the real probe.

For a synthetic sample ($x_j,y_j$), let
$
\Delta\bm\theta_j
$
denote the difference between the AdamW parameter displacements obtained using the candidate gradient and a zero gradient, both computed from the same checkpoint \(\bm\theta_0\) and saved optimizer state \citep{loshchilov2019adamw}. The construction is detailed in Appendix~\ref{app:optimizer}. Let \(\bm g_q\) and \(\bm H_q\) denote the gradient and a local curvature approximation of the loss on the probe set \(\mathcal D_q\), evaluated at the same checkpoint. In our implementation, \(H_q\) is a diagonal empirical-Fisher approximation \citep{kunstner2019empiricalfisher}.
A second-order Taylor approximation gives
\begin{equation}
h_j
=
-\bm g_q^\top\Delta\bm\theta_j
-\frac{1}{2}
\Delta\bm\theta_j^\top
\bm H_q
\Delta\bm\theta_j .
\label{eq:local-value}
\end{equation}
A larger \(h_j\) indicates a larger predicted reduction in loss on the real probe. Unlike \(a_j(S)\), \(h_j\) depends on the complete synthetic example through its induced parameter update. It therefore provides a fixed-checkpoint score for the response-dependent effect induced by the candidate's local optimizer update; this effect is absent from the coverage criterion.

\paragraph{Training Aware Target Coverage (TATC).}

The two quantities play complementary roles in Algorithm~\ref{alg:tatc}: \(h_j\) ranks candidates by their predicted local training effect, while \(a_j(S)\) selects among them based on the additional target coverage of \(x_j\) given the examples already selected.

\begin{wrapfigure}{r}{0.55\columnwidth}
\vspace{-1.2\baselineskip}
\centering
\small

\hrule
\vspace{5pt}
{\captionsetup{aboveskip=0pt,belowskip=0pt}
\captionof{algorithm}{Training-Aware Target Coverage}
\label{alg:tatc}}
\vspace{2pt}
\hrule
\vspace{5pt}

\begin{algorithmic}[1]
\Require Synthetic candidates \(\{(x_j,y_j)\}_{j=1}^N\),
predicted local training effects \(\{h_j\}_{j=1}^N\),
budget \(B\), expansion factor \(\kappa\)
\State \(S_0\gets\varnothing\); rank candidates by decreasing \(h_j\)
\For{\(s=1,\ldots,B\)}
    \State \(\mathcal E_s\gets\) top \(\min\{\kappa s,N\}\) candidates by \(h_j\),
    excluding \(S_{s-1}\)
    \State \(j_s\gets\arg\max_{j\in\mathcal E_s}a_j(S_{s-1})\)
    \State \(S_s\gets S_{s-1}\cup\{j_s\}\)
\EndFor
\Ensure Selected synthetic set \(S_B\)
\end{algorithmic}

\vspace{3pt}
\hrule
\vspace{-0.8\baselineskip}
\end{wrapfigure}

Given a synthetic-data budget \(B\), TATC starts from \(S_0=\varnothing\) and adds one candidate at a time. At step \(s\), it considers the \(\min\{\kappa s,N\}\) highest-ranked candidates under \(h_j\), excluding selected examples, and selects the one with the largest coverage gain \(a_j(S_{s-1})\). Thus, the shortlist expands with \(s\), while the coverage score changes with the selected set. We use \(\kappa=2\) in all experiments. The procedure stops after \(B\) selections and returns \(S_B\) for subsequent fine-tuning. Implementation details are provided in Appendix~\ref{app:algorithm}.

\paragraph{Relation to the linear theory and scope.}
TATC carries the theory's distinction between input information and response error into nonlinear fine-tuning: \(a_j(S)\) measures target-weighted prediction-variance reduction, while \(h_j\) evaluates the local training effect of the complete example on a labeled real probe. Note that the two scores are not on a common scale: \(h_j\) defines the shortlist and \(a_j(S)\) selects within it. Thus, we provide a fixed-checkpoint approximation, where \(h_j\) is computed once at \(\bm\theta_0\) and \(a_j(S)\) changes with the selected set. See details in Appendix~\ref{app:nonlinear}.

\section{Experiments}
\label{sec:experiments}
\label{sec:math-sft}

\begin{figure}[t]
\centering
\includegraphics[width=.85\textwidth]{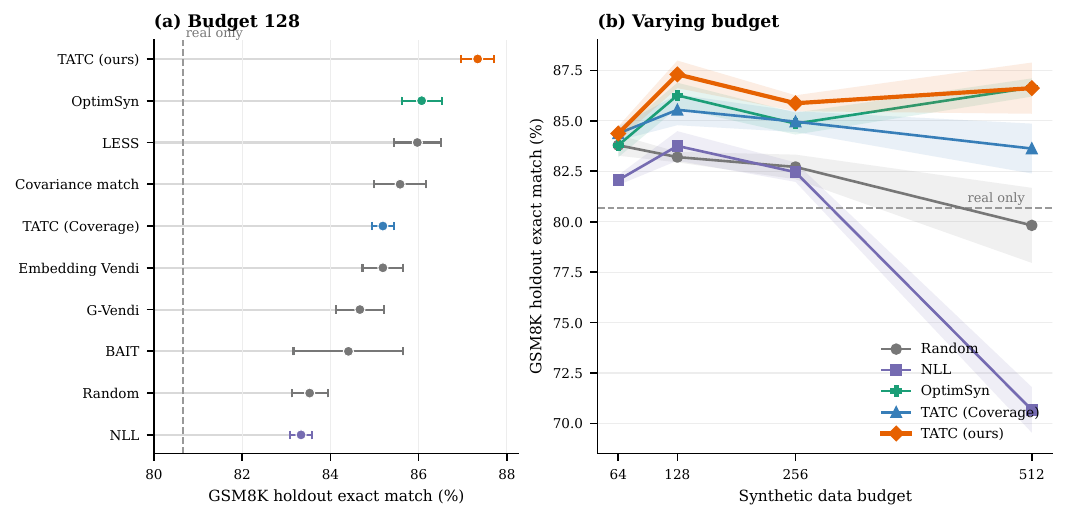}
\caption{
\textbf{(a)} Exact match at \(B=128\) on the GSM8K holdout. The dashed line marks the checkpoint trained on real data. \textbf{(b)} Exact match across selection budgets. Each method selects nested subsets, with each subset trained independently from the same checkpoint. Error bars show one standard deviation over three independent training runs.}
\label{fig:math-main}
\end{figure}

\paragraph{Experimental setup.}

We evaluate Qwen2.5-Math-1.5B-Instruct on
GSM8K. A LoRA adapter~\citep{hu2022lora} with rank 16 is trained for two epochs on 2,048 real examples, producing one checkpoint \(\bm\theta_0\) shared by every method. A disjoint set
of 512 labeled real examples serves as the target probe, and a further 1,024
examples form the holdout set used only for evaluation. The
candidate pool contains \(N=4{,}096\) deduplicated trajectories from
OpenR1-Math-220k~\citep{openr1math}, with at most one trajectory per source
problem; 65\% of the responses are verified as correct. Each selected subset is trained
for one epoch from \(\bm\theta_0\) with an effective batch size of 16.

For TATC, we average the hidden states from the final transformer layer over prompt tokens to obtain 1,536-dimensional representations, then reduce them to 64 dimensions using principal component analysis (PCA) fitted on the 2,048 real-training examples. The same projection is applied to the real probe and synthetic candidates.

\paragraph{Setup and baselines.}

We evaluate synthetic data selection at budget \(B\in\{64,128,256,512\}\), fine-tuning each selected subset from the same checkpoint and evaluating exact match on the GSM8K holdout. Baselines include Random, negative log-likelihood (NLL), Embedding Vendi~\citep{friedman2023vendi}, G-Vendi~\citep{jung2025prismatic}, LESS~\citep{xia2024less}, OptimSyn~\citep{fan2026optimsyn}, Covariance Matching~\citep{rezaei2026highdim}, and BAIT~\citep{ash2021bait}. We additionally include a variant of TATC that removes the training-effect component, TATC (Coverage), serving as an ablation to test whether target coverage alone is sufficient. Baseline implementations and further TATC details are provided in Appendices~\ref{app:baselines} and~\ref{app:math-details}.

\paragraph{Results.}

At \(B=128\), TATC achieves \(87.34\%\) exact match accuracy, exceeding the strongest competitor, OptimSyn (\(86.07\%\)), and the TATC (Coverage) ablation (\(85.19\%\)); the 2.15-point ablation gap demonstrates the added value of the training-effect component (Figure~\ref{fig:math-main}a). This advantage over coverage alone persists across larger budgets and reaches 2.99 points at \(B=512\). TATC is higher than OptimSyn at budgets 64, 128, and 256; at 512, their means are effectively tied within run-to-run variation (Figure~\ref{fig:math-main}b).
These results show that the benefit of the training-aware component emerges beyond the smallest budget and persists as more synthetic data are selected, while the gap between TATC and influence-based selection varies across budgets.

\section{Conclusion}
This work provides a novel perspective in understanding the value of synthetic data: the usefulness of an additional data point is not merely an intrinsic distribution property of this data point, but also depends on what the learner already has and what additional information this example can contribute. The linear model analysis makes this dependence explicit, showing that the same synthetic data can be valuable, redundant, or even harmful depending on the information and errors already present. This perspective also clarifies why continuously increasing the amount of synthetic data will eventually provide diminishing benefit, or even become detrimental.

Our findings also provide a practical basis for moving from understanding synthetic data to selecting data for LLM fine-tuning. TATC follows this insight by considering both how an example affects training and how it complements the data already available. Our experiments demonstrate its effectiveness for selecting synthetic mathematical reasoning data. More broadly, synthetic data pipelines should move beyond evaluating examples in isolation and consider their values in the context of the existing data and the downstream tasks for which they are used.

\label{sec:conclusion}

\section*{AI Use Statement}
Generative AI tools assisted with manuscript drafting and editing, literature
organization, code development and debugging, and preparation of figures and
tables. They were also used to review the exposition of theoretical arguments
and experimental methodology. The authors independently checked the mathematical
derivations, cited sources, experimental code and results. Generative AI output was not treated as evidence or independent
validation, and the authors take responsibility for the final content.

\section*{Ethics Statement}

This work uses public mathematical reasoning and standard classification
datasets and released model weights, and involves no human subjects or
intentionally collected private or personally identifying data. Generated
responses may contain systematic errors, and the proposed selection method
does not replace factual verification, safety evaluation, or appropriate data
governance. Our experiments do not establish the suitability of the method
for high-stakes or safety-critical applications.



\bibliography{iclr2027_conference}

@article{shumailov2024collapse,
  title        = {AI Models Collapse When Trained on Recursively Generated Data},
  author       = {Shumailov, Ilia and Shumaylov, Zakhar and Zhao, Yiren and Papernot, Nicolas and Anderson, Ross and Gal, Yarin},
  journal      = {Nature},
  volume       = {631},
  pages        = {755--759},
  year         = {2024}
}

@inproceedings{alemohammad2024mad,
  title        = {Self-Consuming Generative Models Go {MAD}},
  author       = {Alemohammad, Sina and Casco-Rodriguez, Josue and Luzi, Lorenzo and Humayun, Ahmed Imtiaz and Babaei, Hossein and LeJeune, Daniel and Siahkoohi, Ali and Baraniuk, Richard G.},
  booktitle    = {International Conference on Learning Representations},
  year         = {2024}
}

@inproceedings{yi2026verification,
  title        = {Escaping Model Collapse via Synthetic Data Verification: Near-Term Improvements and Long-Term Convergence},
  author       = {Yi, Bingji and Liu, Qiyuan and Cheng, Yuwei and Xu, Haifeng},
  booktitle    = {International Conference on Learning Representations},
  year         = {2026}
}

@inproceedings{rezaei2026highdim,
  title        = {High-Dimensional Analysis of Synthetic Data Selection},
  author       = {Rezaei, Parham and Kova{\v{c}}evi{\'c}, Filip and Locatello, Francesco and Mondelli, Marco},
  booktitle    = {International Conference on Learning Representations},
  year         = {2026}
}

@article{friedman2023vendi,
  title        = {The {Vendi Score}: A Diversity Evaluation Metric for Machine Learning},
  author       = {Friedman, Dan and Dieng, Adji Bousso},
  journal      = {Transactions on Machine Learning Research},
  year         = {2023}
}

@inproceedings{jung2025prismatic,
  title        = {Prismatic Synthesis: Gradient-based Data Diversification Boosts Generalization in {LLM} Reasoning},
  author       = {Jung, Jaehun and Han, Seungju and Lu, Ximing and Hallinan, Skyler and Acuna, David and Prabhumoye, Shrimai and Patwary, Mostofa and Shoeybi, Mohammad and Catanzaro, Bryan and Choi, Yejin},
  booktitle    = {Advances in Neural Information Processing Systems},
  volume       = {38},
  doi          = {10.52202/085713-3032},
  year         = {2025}
}

@inproceedings{ash2021bait,
  title        = {Gone Fishing: Neural Active Learning with Fisher Embeddings},
  author       = {Ash, Jordan T. and Goel, Surbhi and Krishnamurthy, Akshay and Kakade, Sham},
  booktitle    = {Advances in Neural Information Processing Systems},
  volume       = {34},
  pages        = {8927--8939},
  year         = {2021}
}

@inproceedings{chang2026spice,
  title        = {{SPICE}: Submodular Penalized Information--Conflict Selection for Efficient Large Language Model Training},
  author       = {Chang, Powei and Zhang, Jinpeng and Chen, Bowen and Wang, Chenyu and Guo, Chenlu and Zhang, Yixing and Gao, Yukang and Xiang, JianXiang and Gao, Yue and Sun, Chaoqun and Chen, Yiyi and Kong, Dongying},
  booktitle    = {International Conference on Learning Representations},
  year         = {2026}
}

@inproceedings{koh2017influence,
  title        = {Understanding Black-Box Predictions via Influence Functions},
  author       = {Koh, Pang Wei and Liang, Percy},
  booktitle    = {Proceedings of the 34th International Conference on Machine Learning},
  series       = {Proceedings of Machine Learning Research},
  volume       = {70},
  pages        = {1885--1894},
  year         = {2017}
}

@inproceedings{xia2024less,
  title        = {{LESS}: Selecting Influential Data for Targeted Instruction Tuning},
  author       = {Xia, Mengzhou and Malladi, Sadhika and Gururangan, Suchin and Arora, Sanjeev and Chen, Danqi},
  booktitle    = {Proceedings of the 41st International Conference on Machine Learning},
  series       = {Proceedings of Machine Learning Research},
  volume       = {235},
  pages        = {54104--54132},
  year         = {2024}
}

@inproceedings{fan2026optimsyn,
  title        = {{OptimSyn}: Influence-Guided Rubrics Optimization for Synthetic Data Generation},
  author       = {Fan, Zhiting and Chen, Ruizhe and Hu, Tianxiang and Peng, Ru and Huang, Zenan and Xu, Haokai and Chen, Yixin and Wu, Jian and Zhao, Junbo and Liu, Zuozhu},
  booktitle    = {International Conference on Learning Representations},
  year         = {2026}
}

@article{cobbe2021gsm8k,
  title        = {Training Verifiers to Solve Math Word Problems},
  author       = {Cobbe, Karl and Kosaraju, Vineet and Bavarian, Mohammad and Chen, Mark and Jun, Heewoo and Kaiser, Lukasz and Plappert, Matthias and Tworek, Jerry and Hilton, Jacob and Nakano, Reiichiro and Hesse, Christopher and Schulman, John},
  journal      = {arXiv preprint arXiv:2110.14168},
  year         = {2021}
}

@misc{openr1math,
  title        = {Open R1: A Fully Open Reproduction of {DeepSeek-R1}},
  author       = {{Hugging Face}},
  month        = jan,
  year         = {2025},
  url          = {https://github.com/huggingface/open-r1}
}

@inproceedings{hu2022lora,
  title        = {{LoRA}: Low-Rank Adaptation of Large Language Models},
  author       = {Hu, Edward J. and Shen, Yelong and Wallis, Phillip and Allen-Zhu, Zeyuan and Li, Yuanzhi and Wang, Shean and Wang, Lu and Chen, Weizhu},
  booktitle    = {International Conference on Learning Representations},
  year         = {2022}
}

@inproceedings{wang2023selfinstruct,
  title        = {Self-Instruct: Aligning Language Models with Self-Generated Instructions},
  author       = {Wang, Yizhong and Kordi, Yeganeh and Mishra, Swaroop and Liu, Alisa and Smith, Noah A. and Khashabi, Daniel and Hajishirzi, Hannaneh},
  booktitle    = {Proceedings of the 61st Annual Meeting of the Association for Computational Linguistics (Volume 1: Long Papers)},
  pages        = {13484--13508},
  year         = {2023},
  publisher    = {Association for Computational Linguistics},
  doi          = {10.18653/v1/2023.acl-long.754},
  url          = {https://aclanthology.org/2023.acl-long.754/}
}

@inproceedings{zelikman2022star,
  title        = {{STaR}: Bootstrapping Reasoning With Reasoning},
  author       = {Zelikman, Eric and Wu, Yuhuai and Mu, Jesse and Goodman, Noah D.},
  booktitle    = {Advances in Neural Information Processing Systems},
  volume       = {35},
  year         = {2022},
  doi          = {10.52202/068431-1126}
}

@article{gunasekar2023textbooks,
  title        = {Textbooks Are All You Need},
  author       = {Gunasekar, Suriya and Zhang, Yi and Aneja, Jyoti and Mendes, Caio C{\'e}sar Teodoro and Del Giorno, Allie and Gopi, Sivakanth and Javaheripi, Mojan and Kauffmann, Piero and de Rosa, Gustavo and Saarikivi, Olli and Salim, Adil and Shah, Shital and Behl, Harkirat Singh and Wang, Xin and Bubeck, S{\'e}bastien and Eldan, Ronen and Kalai, Adam Tauman and Lee, Yin Tat and Li, Yuanzhi},
  journal      = {arXiv preprint arXiv:2306.11644},
  year         = {2023},
  url          = {https://arxiv.org/abs/2306.11644}
}

@inproceedings{killamsetty2021glister,
  title        = {{GLISTER}: Generalization Based Data Subset Selection for Efficient and Robust Learning},
  author       = {Killamsetty, Krishnateja and Sivasubramanian, Durga and Ramakrishnan, Ganesh and Iyer, Rishabh},
  booktitle    = {Proceedings of the AAAI Conference on Artificial Intelligence},
  volume       = {35},
  pages        = {8110--8118},
  year         = {2021},
  doi          = {10.1609/aaai.v35i9.16988}
}

@inproceedings{mindermann2022rholoss,
  title        = {Prioritized Training on Points That Are Learnable, Worth Learning, and Not Yet Learnt},
  author       = {Mindermann, S{\"o}ren and Brauner, Jan M. and Razzak, Muhammed T. and Sharma, Mrinank and Kirsch, Andreas and Xu, Winnie and H{\"o}ltgen, Benedikt and Gomez, Aidan N. and Morisot, Adrien and Farquhar, Sebastian and Gal, Yarin},
  booktitle    = {Proceedings of the 39th International Conference on Machine Learning},
  series       = {Proceedings of Machine Learning Research},
  volume       = {162},
  pages        = {15630--15649},
  year         = {2022},
  url          = {https://proceedings.mlr.press/v162/mindermann22a.html}
}

@inproceedings{ghorbani2019datashapley,
  title        = {Data Shapley: Equitable Valuation of Data for Machine Learning},
  author       = {Ghorbani, Amirata and Zou, James},
  booktitle    = {Proceedings of the 36th International Conference on Machine Learning},
  series       = {Proceedings of Machine Learning Research},
  volume       = {97},
  pages        = {2242--2251},
  year         = {2019},
  url          = {https://proceedings.mlr.press/v97/ghorbani19c.html}
}

@inproceedings{liu2024deita,
  title        = {What Makes Good Data for Alignment? A Comprehensive Study of Automatic Data Selection in Instruction Tuning},
  author       = {Liu, Wei and Zeng, Weihao and He, Keqing and Jiang, Yong and He, Junxian},
  booktitle    = {International Conference on Learning Representations},
  year         = {2024},
  url          = {https://openreview.net/forum?id=BTKAeLqLMw}
}

@inproceedings{chen2024alpagasus,
  title        = {{AlpaGasus}: Training a Better Alpaca with Fewer Data},
  author       = {Chen, Lichang and Li, Shiyang and Yan, Jun and Wang, Hai and Gunaratna, Kalpa and Yadav, Vikas and Tang, Zheng and Srinivasan, Vijay and Zhou, Tianyi and Huang, Heng and Jin, Hongxia},
  booktitle    = {International Conference on Learning Representations},
  year         = {2024},
  url          = {https://openreview.net/forum?id=FdVXgSJhvz}
}

@inproceedings{loshchilov2019adamw,
  title        = {Decoupled Weight Decay Regularization},
  author       = {Loshchilov, Ilya and Hutter, Frank},
  booktitle    = {International Conference on Learning Representations},
  year         = {2019},
  url          = {https://openreview.net/forum?id=Bkg6RiCqY7}
}

@inproceedings{kunstner2019empiricalfisher,
  title        = {Limitations of the Empirical Fisher Approximation for Natural Gradient Descent},
  author       = {Kunstner, Frederik and Balles, Lukas and Hennig, Philipp},
  booktitle    = {Advances in Neural Information Processing Systems},
  volume       = {32},
  year         = {2019}
}

@techreport{krizhevsky2009cifar,
  title        = {Learning Multiple Layers of Features from Tiny Images},
  author       = {Krizhevsky, Alex},
  institution  = {University of Toronto},
  year         = {2009},
  url          = {https://www.cs.toronto.edu/~kriz/learning-features-2009-TR.pdf}
}

@inproceedings{pang2004sentimental,
  title        = {A Sentimental Education: Sentiment Analysis Using Subjectivity Summarization Based on Minimum Cuts},
  author       = {Pang, Bo and Lee, Lillian},
  booktitle    = {Proceedings of the 42nd Annual Meeting of the Association for Computational Linguistics ({ACL}-04)},
  pages        = {271--278},
  year         = {2004},
  doi          = {10.3115/1218955.1218990},
  url          = {https://aclanthology.org/P04-1035/}
}

@inproceedings{he2016resnet,
  title        = {Deep Residual Learning for Image Recognition},
  author       = {He, Kaiming and Zhang, Xiangyu and Ren, Shaoqing and Sun, Jian},
  booktitle    = {Proceedings of the IEEE Conference on Computer Vision and Pattern Recognition},
  pages        = {770--778},
  year         = {2016},
  doi          = {10.1109/CVPR.2016.90}
}

@article{yang2024qwenmath,
  title        = {{Qwen2.5-Math} Technical Report: Toward Mathematical Expert Model via Self-Improvement},
  author       = {Yang, An and Zhang, Beichen and Hui, Binyuan and Gao, Bofei and Yu, Bowen and Li, Chengpeng and Liu, Dayiheng and Tu, Jianhong and Zhou, Jingren and Lin, Junyang and Lu, Keming and Xue, Mingfeng and Lin, Runji and Liu, Tianyu and Ren, Xingzhang and Zhang, Zhenru},
  journal      = {arXiv preprint arXiv:2409.12122},
  year         = {2024},
  url          = {https://arxiv.org/abs/2409.12122}
}

@inproceedings{ba2024fillgaps,
  title        = {Fill In The Gaps: Model Calibration and Generalization with Synthetic Data},
  author       = {Ba, Yang and Mancenido, Michelle V. and Pan, Rong},
  booktitle    = {Proceedings of the 2024 Conference on Empirical Methods in Natural Language Processing},
  pages        = {17211--17225},
  year         = {2024},
  publisher    = {Association for Computational Linguistics},
  doi          = {10.18653/v1/2024.emnlp-main.955},
  url          = {https://aclanthology.org/2024.emnlp-main.955/}
}

@inproceedings{hsieh2023distilling,
  title        = {Distilling Step-by-Step! Outperforming Larger Language Models with Less Training Data and Smaller Model Sizes},
  author       = {Hsieh, Cheng-Yu and Li, Chun-Liang and Yeh, Chih-kuan and Nakhost, Hootan and Fujii, Yasuhisa and Ratner, Alex and Krishna, Ranjay and Lee, Chen-Yu and Pfister, Tomas},
  booktitle    = {Findings of the Association for Computational Linguistics: ACL 2023},
  pages        = {8003--8017},
  year         = {2023},
  publisher    = {Association for Computational Linguistics},
  doi          = {10.18653/v1/2023.findings-acl.507},
  url          = {https://aclanthology.org/2023.findings-acl.507/}
}

@inproceedings{wang2023synthesize,
  title        = {Let's Synthesize Step by Step: Iterative Dataset Synthesis with Large Language Models by Extrapolating Errors from Small Models},
  author       = {Wang, Ruida and Zhou, Wangchunshu and Sachan, Mrinmaya},
  booktitle    = {Findings of the Association for Computational Linguistics: EMNLP 2023},
  pages        = {11817--11831},
  year         = {2023},
  publisher    = {Association for Computational Linguistics},
  doi          = {10.18653/v1/2023.findings-emnlp.791},
  url          = {https://aclanthology.org/2023.findings-emnlp.791/}
}

@inproceedings{xu2024wizardlm,
  title        = {{WizardLM}: Empowering Large Pre-Trained Language Models to Follow Complex Instructions},
  author       = {Xu, Can and Sun, Qingfeng and Zheng, Kai and Geng, Xiubo and Zhao, Pu and Feng, Jiazhan and Tao, Chongyang and Lin, Qingwei and Jiang, Daxin},
  booktitle    = {International Conference on Learning Representations},
  year         = {2024},
  url          = {https://openreview.net/forum?id=CfXh93NDgH}
}

@inproceedings{yu2024metamath,
  title        = {{MetaMath}: Bootstrap Your Own Mathematical Questions for Large Language Models},
  author       = {Yu, Longhui and Jiang, Weisen and Shi, Han and Yu, Jincheng and Liu, Zhengying and Zhang, Yu and Kwok, James T. and Li, Zhenguo and Weller, Adrian and Liu, Weiyang},
  booktitle    = {International Conference on Learning Representations},
  year         = {2024},
  url          = {https://openreview.net/forum?id=N8N0hgNDRt}
}

@article{ge2024personahub,
  title        = {Scaling Synthetic Data Creation with 1,000,000,000 Personas},
  author       = {Ge, Tao and Chan, Xin and Wang, Xiaoyang and Yu, Dian and Mi, Haitao and Yu, Dong},
  journal      = {arXiv preprint arXiv:2406.20094},
  year         = {2024},
  url          = {https://arxiv.org/abs/2406.20094}
}

@inproceedings{xu2025magpie,
  title        = {Magpie: Alignment Data Synthesis from Scratch by Prompting Aligned {LLM}s with Nothing},
  author       = {Xu, Zhangchen and Jiang, Fengqing and Niu, Luyao and Deng, Yuntian and Poovendran, Radha and Choi, Yejin and Lin, Bill Yuchen},
  booktitle    = {International Conference on Learning Representations},
  year         = {2025},
  url          = {https://openreview.net/forum?id=Pnk7vMbznK}
}

@article{li2024glan,
  title        = {Synthetic Data (Almost) from Scratch: Generalized Instruction Tuning for Language Models},
  author       = {Li, Haoran and Dong, Qingxiu and Tang, Zhengyang and Wang, Chaojun and Zhang, Xingxing and Huang, Haoyang and Huang, Shaohan and Huang, Xiaolong and Huang, Zeqiang and Zhang, Dongdong and Gu, Yuxian and Cheng, Xin and Wang, Xun and Chen, Si-Qing and Dong, Li and Lu, Wei and Sui, Zhifang and Wang, Benyou and Lam, Wai and Wei, Furu},
  journal      = {arXiv preprint arXiv:2402.13064},
  year         = {2024},
  url          = {https://arxiv.org/abs/2402.13064}
}
\bibliographystyle{iclr2027_conference}

\appendix
\raggedbottom
\section{Mathematical Proofs in Linear Theory}
\label{app:proofs}

Let \(\bm X_r\in\mathbb R^{n\times d}\) and \(\bm X_s\in\mathbb R^{m\times d}\) be the real and synthetic feature matrices.  All feature and noise variables have finite second moments. The label model is
\begin{equation*}
 \bm y_r=\bm X_r\bm\theta^\star+\bm\varepsilon_r,
 \qquad
 \bm y_s=\bm X_s\bm\theta^\star+\bm d_s+\bm\varepsilon_s,
\end{equation*}
where, conditional on the feature matrices,
\begin{equation*}
 \mathbb E[\bm\varepsilon_r]=\bm0,
 \quad
 \mathbb E[\bm\varepsilon_s]=\bm0,
 \quad
 \operatorname{Cov}(\bm\varepsilon_r)=\bm\Omega_r,
 \quad
 \operatorname{Cov}(\bm\varepsilon_s)=\bm\Omega_s,
 \quad
 \operatorname{Cov}(\bm\varepsilon_r,\bm\varepsilon_s)=\bm0.
\end{equation*}
The systematic label error \(\bm d_s\) is nonrandom.  Selected synthetic subsets and candidate indices are fixed, or chosen independently of the realized noise innovations; expectations below are conditional on the relevant feature matrices. The least-squares estimator and downstream second-moment matrix are
\begin{align*}
 \widehat{\bm\theta}_s
 &=\arg\min_{\bm\theta}
 \left\{
 \tfrac12\|\bm X_r\bm\theta-\bm y_r\|_2^2
 +\tfrac12\|\bm X_s\bm\theta-\bm y_s\|_2^2
 \right\},\\
 \bm Q
 &=\mathbb E_{X\sim P_{\mathrm{test}}}
 [\bm\phi(X)\bm\phi(X)^\top]\succeq\bm0.
\end{align*}
For any parameter estimate \(\widehat{\bm\theta}\), its downstream risk is
\begin{equation*}
 \Risk_Q(\widehat{\bm\theta})
 =\|\widehat{\bm\theta}-\bm\theta^\star\|_{\bm Q}^2
 :=(\widehat{\bm\theta}-\bm\theta^\star)^\top
 \bm Q(\widehat{\bm\theta}-\bm\theta^\star).
\end{equation*}
Finally, define
\begin{equation*}
 \bm A=\bm X_r^\top\bm X_r+\bm X_s^\top\bm X_s,
 \qquad
 \bm u=\bm X_s^\top\bm d_s,
 \qquad
 \bm N=\bm X_r^\top\bm\Omega_r\bm X_r
       +\bm X_s^\top\bm\Omega_s\bm X_s.
\end{equation*}

\subsection{Proof of Proposition~\ref{prop:master-risk}}
\label{app:master-proof}

Because \(\bm A\) is nonsingular, the normal equations for the least-squares
problem above have the unique solution
\begin{align*}
 \widehat{\bm\theta}_s
 &=\bm A^{-1}(\bm X_r^\top\bm y_r+\bm X_s^\top\bm y_s)\\
 &=\bm\theta^\star
   +\bm A^{-1}\bm u
   +\bm A^{-1}
     (\bm X_r^\top\bm\varepsilon_r
      +\bm X_s^\top\bm\varepsilon_s).
\end{align*}
Write
\(
 \bm b=\bm A^{-1}\bm u
\)
and
\(
 \bm z=\bm A^{-1}
 (\bm X_r^\top\bm\varepsilon_r+\bm X_s^\top\bm\varepsilon_s).
\)
The conditional noise assumptions give
\begin{equation*}
 \mathbb E[\bm z\mid\bm X_r,\bm X_s]=\bm0,
 \qquad
 \operatorname{Cov}(\bm z\mid\bm X_r,\bm X_s)
 =\bm A^{-1}\bm N\bm A^{-1}.
\end{equation*}
Substituting \(\widehat{\bm\theta}_s-\bm\theta^\star=\bm b+\bm z\)
into the downstream-risk definition and taking the conditional expectation yields
\begin{align*}
 \mathbb E[\Risk_Q(\widehat{\bm\theta}_s)\mid\bm X_r,\bm X_s]
 &=\bm b^\top\bm Q\bm b
   +2\bm b^\top\bm Q\mathbb E[\bm z\mid\bm X_r,\bm X_s]
   +\mathbb E[\bm z^\top\bm Q\bm z\mid\bm X_r,\bm X_s]\\
 &=\bm b^\top\bm Q\bm b
   +\tr\!\left(
      \bm Q\operatorname{Cov}(\bm z\mid\bm X_r,\bm X_s)\right)\\
 &=(\bm A^{-1}\bm u)^\top\bm Q(\bm A^{-1}\bm u)
   +\tr(\bm Q\bm A^{-1}\bm N\bm A^{-1}).
\end{align*}
The cross term vanishes because
\(\mathbb E[\bm z\mid\bm X_r,\bm X_s]=\bm0\), and the last equality uses
\(\mathbb E[\bm z^\top\bm Q\bm z\mid\bm X_r,\bm X_s]
=\tr(\bm Q\operatorname{Cov}(\bm z\mid\bm X_r,\bm X_s))\) for a conditionally zero-mean vector.  This proves Equation~\eqref{eq:master-risk}. \(\square\)

\subsection{Proof of Corollary~\ref{cor:set-value}}
\label{app:set-value-proof}

For a selected set \(S\), write
\begin{equation*}
 \bm A_r=\bm X_r^\top\bm X_r,
 \qquad
 \bm A_S=\bm X_S^\top\bm X_S,
 \qquad
 \bm A=\bm A_r+\bm A_S.
\end{equation*}
Under \(\bm\Omega_r=\sigma^2\bm I\) and
\(\bm\Omega_S=(\sigma^2+\tau^2)\bm I\), the noise matrix in
the downstream-risk decomposition becomes
\begin{equation*}
 \bm N
 =\sigma^2\bm A_r+(\sigma^2+\tau^2)\bm A_S
 =\sigma^2\bm A+\tau^2\bm A_S.
\end{equation*}
The real-only estimator is unbiased and, because \(\bm A_r\succ\bm0\), its
conditional risk is
\begin{equation*}
 \mathbb E[\Risk_Q(\widehat{\bm\theta}_r)\mid\bm X_r,\bm X_S]
 =\sigma^2\tr(\bm Q\bm A_r^{-1}).
\end{equation*}
Proposition~\ref{prop:master-risk}, the expression for \(\bm N\) above, and
\(\bm u=\bm X_S^\top\bm d_S\) give the selected-set risk
\begin{align*}
 \mathbb E[\Risk_Q(\widehat{\bm\theta}_S)\mid\bm X_r,\bm X_S]
 ={}&
 (\bm A^{-1}\bm X_S^\top\bm d_S)^\top
 \bm Q(\bm A^{-1}\bm X_S^\top\bm d_S)\\
 &+\sigma^2\tr(\bm Q\bm A^{-1})
 +\tau^2\tr
 (\bm Q\bm A^{-1}\bm A_S\bm A^{-1}).
\end{align*}
Subtracting the selected-set risk from the real-only risk gives
Equation~\eqref{eq:set-value}.
\(\square\)

\subsection{Proof of Proposition~\ref{prop:modes}}
\label{app:modes-proof}

Let
\begin{equation*}
 \bm B=\bm A_r^{-1/2}\bm A_S\bm A_r^{-1/2}
 =\bm R\bm M\bm R^\top,
 \quad
 \bm M=\operatorname{diag}(\mu_1,\ldots,\mu_d),
 \quad
 \bm D=\bm A_r^{-1/2}\bm Q\bm A_r^{-1/2}.
\end{equation*}
Then
\begin{equation*}
 \bm A
 =\bm A_r^{1/2}(\bm I+\bm B)\bm A_r^{1/2},
 \qquad
 \bm A^{-1}
 =\bm A_r^{-1/2}(\bm I+\bm B)^{-1}\bm A_r^{-1/2}.
\end{equation*}
Using cyclic invariance of the trace,
\begin{align*}
 \tr[\bm Q(\bm A_r^{-1}-\bm A^{-1})]
 &=\tr\!\left[
 \bm D\{\bm I-(\bm I+\bm B)^{-1}\}\right]\\
 &=\tr\!\left[
 \bm R^\top\bm D\bm R
 \operatorname{diag}\!\left(
 \frac{\mu_i}{1+\mu_i}
 \right)_{i=1}^d\right]\\
 &=\sum_{i=1}^d
 (\bm r_i^\top\bm D\bm r_i)
 \frac{\mu_i}{1+\mu_i}.
\end{align*}
Thus the information term equals
\(\sigma^2\sum_i w_i\mu_i/(1+\mu_i)\), where
\(w_i=\bm r_i^\top\bm D\bm r_i\).

The excess-noise term is handled in the same basis:
\begin{align*}
 \tr(\bm Q\bm A^{-1}\bm A_S\bm A^{-1})
 &=\tr\!\left[
 \bm D(\bm I+\bm B)^{-1}\bm B(\bm I+\bm B)^{-1}
 \right]\\
 &=\sum_{i=1}^d w_i\frac{\mu_i}{(1+\mu_i)^2}.
\end{align*}
These two trace decompositions require no
alignment between \(\bm D\) and \(\bm B\), because the matrices multiplying
\(\bm R^\top\bm D\bm R\) are diagonal and a trace uses only its diagonal.

It remains to evaluate systematic error.  Suppose
\(\bm d_S=\bm X_S\bm\delta\) and define
\(\bm\eta=\bm R^\top\bm A_r^{1/2}\bm\delta\).  Then
\begin{align*}
 \bm A^{-1}\bm X_S^\top\bm d_S
 &=\bm A^{-1}\bm A_S\bm\delta\\
 &=\bm A_r^{-1/2}
 \bm R\operatorname{diag}\!\left(
   \frac{\mu_i}{1+\mu_i}
   \right)_{i=1}^d\bm\eta.
\end{align*}
Let \(\bm T=\operatorname{diag}(\mu_i/(1+\mu_i))\).  The squared
\(\bm Q\)-norm of this vector is
\begin{equation*}
 \bm\eta^\top\bm T\bm R^\top\bm D\bm R\bm T\bm\eta.
\end{equation*}
When \(\bm R^\top\bm D\bm R\) is diagonal, its \(i\)-th diagonal entry is
\(w_i\), so this expression reduces to
\begin{equation*}
 \sum_{i=1}^d
 w_i\eta_i^2\left(\frac{\mu_i}{1+\mu_i}\right)^2.
\end{equation*}
Combining the information, excess-noise, and systematic-error expressions in
Corollary~\ref{cor:set-value} proves
Equation~\eqref{eq:mode-value}. \(\square\)

\subsection{Proof of Corollary~\ref{cor:beneficial-direction}}
\label{app:beneficial-proof}

For \(\mu_i>0\) and \(w_i>0\), the sign of the \(i\)-th contribution in
Equation~\eqref{eq:mode-value} is the sign of
\begin{equation*}
 \sigma^2\frac{\mu_i}{1+\mu_i}
 -\tau^2\frac{\mu_i}{(1+\mu_i)^2}
 -\eta_i^2\frac{\mu_i^2}{(1+\mu_i)^2}.
\end{equation*}
Multiplying by the positive quantity
\((1+\mu_i)^2/\mu_i\) shows that this contribution is positive exactly when
\begin{equation*}
 \sigma^2(1+\mu_i)-\tau^2-\eta_i^2\mu_i>0.
\end{equation*}
Rearranging and dividing by \(\mu_i>0\) gives
\begin{equation*}
 \eta_i^2<\sigma^2+\frac{\sigma^2-\tau^2}{\mu_i},
\end{equation*}
which proves the corollary. \(\square\)

\subsection{Proof of Corollary~\ref{cor:optimal-amount}}
\label{app:amount-proof}

Hold \(w_i\) and \(\eta_i\) unchanged while varying \(\mu_i\ge0\).
For \(w_i>0\), it is enough to maximize the bracketed contribution
\begin{equation*}
 f(\mu)
 =\sigma^2\frac{\mu}{1+\mu}
  -\tau^2\frac{\mu}{(1+\mu)^2}
  -\eta_i^2\left(\frac{\mu}{1+\mu}\right)^2.
\end{equation*}
Direct differentiation gives
\begin{equation*}
 f'(\mu)
 =\frac{(\sigma^2-\tau^2)
 +\mu(\sigma^2+\tau^2-2\eta_i^2)}{(1+\mu)^3}.
\end{equation*}
The denominator is positive.  If \(\sigma^2>\tau^2\) and
\(2\eta_i^2>\sigma^2+\tau^2\), the numerator starts positive and decreases
linearly, crossing zero once at
\begin{equation*}
 \mu_i^\star
 =\frac{\sigma^2-\tau^2}
 {2\eta_i^2-\sigma^2-\tau^2}.
\end{equation*}
Consequently, \(f\) increases before this point and decreases after it, so the
point is the unique global maximizer on \([0,\infty)\).

For the remaining cases, write \(s=\sigma^2\), \(t=\tau^2\), and
\(r=\eta_i^2\).  Equivalently,
\(f(\mu)=\mu[(s-t)+\mu(s-r)]/(1+\mu)^2\), with
\(f(0)=0\) and \(\lim_{\mu\to\infty}f(\mu)=s-r\).
If \(s<t\) and \(r\ge s\), or if \(s=t\) and \(r>s\), this numerator is
strictly negative for \(\mu>0\), so zero is the unique maximizer.  If
\(s=t=r\), then \(f\equiv0\).  If \(s>t\) and
\(2r\le s+t\), the derivative above is positive for
every finite \(\mu\), so the supremum is approached at infinity.  If
\(s\le t\) and \(r<s\), the derivative is positive for \(\mu>0\) when
\(s=t\); when \(s<t\), it changes once from negative to positive, after which
\(f\) increases to the positive limit \(s-r\).  Thus this case also has no
finite maximizer.  These conclusions concern the actual directional
contribution when \(w_i>0\); if \(w_i=0\), that contribution is identically
zero and every amount is optimal. \(\square\)

\subsection{Proof of Proposition~\ref{prop:nonidentifiability}}
\label{app:nonidentifiability-proof}

Take a direction \(\bm v\) satisfying the conditions in the proposition.  For
any scalar \(c\), define a second parameterization
\begin{equation*}
 \widetilde{\bm\theta}^{\star}=\bm\theta^\star+c\bm v,
 \qquad
 \widetilde{\bm d}_S=\bm d_S-c\bm X_S\bm v.
\end{equation*}
The real conditional mean is unchanged because
\begin{equation*}
 \bm X_r\widetilde{\bm\theta}^{\star}
 =\bm X_r\bm\theta^\star+c\bm X_r\bm v
 =\bm X_r\bm\theta^\star.
\end{equation*}
The synthetic conditional mean is also unchanged:
\begin{equation*}
 \bm X_S\widetilde{\bm\theta}^{\star}+\widetilde{\bm d}_S
 =\bm X_S(\bm\theta^\star+c\bm v)
  +\bm d_S-c\bm X_S\bm v
 =\bm X_S\bm\theta^\star+\bm d_S.
\end{equation*}
The noise laws are identical, so the two parameterizations induce exactly the
same distribution for all observed real and synthetic labels.  Nevertheless,
they assign different target conditional means in direction \(\bm v\), which matters
downstream because \(\bm v^\top\bm Q\bm v>0\).

For completeness, an admissible one-direction instance shows that this
ambiguity can reverse the sign of synthetic-data value.  Take the model space
to be the span of \(\bm v\), let \(\bm X_r\bm v=\bm0\),
\(\bm X_S\bm v\ne\bm0\), and set both noise covariances to zero.  Define the
real-only estimate by the Moore--Penrose, or equivalently minimum-norm, rule;
its coefficient in this unobserved direction is zero.  Since the scalar
selected-set Gram matrix is positive, the selected-set least-squares estimate
is unique.  Give the observed synthetic label the conditional mean
\(a\bm X_S\bm v\), with \(a\ne0\).  The two observationally equivalent worlds
are
\begin{align*}
 \mathcal W_+: \quad&
 \bm\theta^\star=a\bm v,
 \quad \bm d_S=\bm0,\\
 \mathcal W_-: \quad&
 \bm\theta^\star=-a\bm v,
 \quad \bm d_S=2a\bm X_S\bm v.
\end{align*}
They have the same real and synthetic conditional means.  Omitting common
orthogonal components and random-noise terms, the real-only and selected-set
risks in \(\mathcal W_+\) are respectively
\(a^2\bm v^\top\bm Q\bm v\) and zero, so the value is positive.  In
\(\mathcal W_-\), the corresponding risks are
\(a^2\bm v^\top\bm Q\bm v\) and
\(4a^2\bm v^\top\bm Q\bm v\), so the value is
\(-3a^2\bm v^\top\bm Q\bm v<0\).  No statistic of the observed data can know
which world generated them.

The sign-reversal construction is existential and uses the additional conditions---a
declared minimum-norm real-only rule, a nonsingular combined Gram matrix in the
one-direction instance, and zero random noise---that make \(\Value(S)\)
unambiguous.  The observational-equivalence result itself needs none of these
extra conditions.  Observational equivalence alone does not assert a
sign reversal for every fixed observed distribution. The proposition is complementary to the directional value
results, which assume \(\bm A_r\succ\bm0\); it is not an application of
Equation~\eqref{eq:set-value} with an ordinary inverse.  Its
information-theoretic conclusion is that without labeled real evidence in a
downstream-relevant direction, target signal and systematic label error
cannot be separated, and admissible worlds can assign opposite signs to value.
\(\square\)

\subsection{Proof of Theorem~\ref{thm:rank-one-marginal}}
\label{app:rank-one-proof}

The scalar variance \(\omega_j\) in the theorem represents a fresh, conditionally zero-mean noise
innovation.  Formally, the rank-one update below requires this innovation to be
uncorrelated, conditional on all feature matrices, with the real and already
selected label noises.  It also requires \(\bm A\) to be positive definite,
which makes \(c_j>0\).  If a candidate noise is correlated with existing
noises, the augmented \(\bm N\) contains cross-covariance terms and
Equation~\eqref{eq:rank-one-marginal} needs corresponding additional terms.

For the real data together with the current selected set \(S\), let
\begin{equation*}
 \bm b=\bm A^{-1}\bm u,
 \qquad
 \bm p_j=\bm A^{-1}\bm x_j,
 \qquad
 c_j=1+\bm x_j^\top\bm p_j,
 \qquad
 e_j=\delta_j-\bm x_j^\top\bm b,
 \qquad
 q_j=\bm p_j^\top\bm Q\bm p_j.
\end{equation*}
Adding candidate \(j\) changes the three sufficient statistics to
\begin{equation*}
 \bm A_+=\bm A+\bm x_j\bm x_j^\top,
 \qquad
 \bm u_+=\bm u+\bm x_j\delta_j,
 \qquad
 \bm N_+=\bm N+\omega_j\bm x_j\bm x_j^\top.
\end{equation*}
The Sherman--Morrison identity gives
\begin{equation*}
 \bm A_+^{-1}
 =\bm A^{-1}-\frac{\bm p_j\bm p_j^\top}{c_j},
 \qquad
 \bm A_+^{-1}\bm x_j=\frac{\bm p_j}{c_j}.
\end{equation*}
Using \(\bm A^{-1}\bm u=\bm b\) and
\(\bm p_j^\top\bm u=\bm x_j^\top\bm b\), the new bias vector is
\begin{align*}
 \bm b_+
 &=\bm A_+^{-1}\bm u_+\\
 &=\left(\bm A^{-1}-\frac{\bm p_j\bm p_j^\top}{c_j}\right)
   (\bm u+\bm x_j\delta_j)\\
 &=\bm b+\frac{\delta_j-\bm x_j^\top\bm b}{c_j}\bm p_j
 =\bm b+\frac{e_j}{c_j}\bm p_j.
\end{align*}
Hence the reduction in the squared-bias component of risk is
\begin{align*}
 \bm b^\top\bm Q\bm b-\bm b_+^\top\bm Q\bm b_+
 =-\frac{2e_j\bm b^\top\bm Q\bm p_j}{c_j}
  -\frac{e_j^2q_j}{c_j^2}.
\end{align*}

For the variance component, first expand the contribution from the existing
noise matrix.  Substituting the Sherman--Morrison update and using cyclic
invariance of the trace gives
\begin{align*}
 &\tr(\bm Q\bm A^{-1}\bm N\bm A^{-1})
 -\tr(\bm Q\bm A_+^{-1}\bm N\bm A_+^{-1})\\
 &\qquad=
 \frac{2\bm p_j^\top\bm Q\bm A^{-1}\bm N\bm p_j}{c_j}
 -\frac{q_j\bm p_j^\top\bm N\bm p_j}{c_j^2}.
\end{align*}
The candidate's own random-noise contribution to the new risk is
\begin{align*}
 \omega_j\tr
 (\bm Q\bm A_+^{-1}\bm x_j\bm x_j^\top\bm A_+^{-1})
 =\frac{\omega_jq_j}{c_j^2}.
\end{align*}
Because marginal value is old risk minus new risk, adding the squared-bias
change and the two variance changes above
proves Equation~\eqref{eq:rank-one-marginal}.

To verify the clean homoscedastic specialization in
Equation~\eqref{eq:coverage-marginal}, suppose
\(\bm N=\sigma^2\bm A\), \(\bm b=\bm0\), \(\delta_j=0\), and
\(\omega_j=\sigma^2\).  Then
\begin{equation*}
 \bm p_j^\top\bm Q\bm A^{-1}\bm N\bm p_j=\sigma^2q_j,
 \qquad
 \bm p_j^\top\bm N\bm p_j
 =\sigma^2\bm x_j^\top\bm A^{-1}\bm x_j
 =\sigma^2(c_j-1).
\end{equation*}
Substitution into Equation~\eqref{eq:rank-one-marginal} leaves
\begin{equation*}
 \Delta_j(S)
 =\frac{2\sigma^2q_j}{c_j}
 -\frac{\sigma^2q_j(c_j-1+1)}{c_j^2}
 =\frac{\sigma^2q_j}{c_j}.
\end{equation*}
Thus the stated coverage marginal follows exactly.

\paragraph{More general specializations.}
Retain \(\bm N=\sigma^2\bm A\) and \(\bm b=\bm0\), so that \(e_j=\delta_j\),
but now let the candidate's systematic error \(\delta_j\) and noise variance
\(\omega_j\) be arbitrary rather than zero and \(\sigma^2\) respectively. As
computed above,
\begin{equation*}
 \bm p_j^\top\bm Q\bm A^{-1}\bm N\bm p_j=\sigma^2q_j,
 \qquad
 \bm p_j^\top\bm N\bm p_j=\sigma^2(c_j-1).
\end{equation*}
Substituting into Equation~\eqref{eq:rank-one-marginal}, the two variance
terms combine as
\begin{equation*}
 \frac{2\sigma^2q_j}{c_j}
 -\frac{q_j\big(\sigma^2(c_j-1)+\omega_j\big)}{c_j^2}
 =\frac{q_j\big(\sigma^2c_j+\sigma^2-\omega_j\big)}{c_j^2},
\end{equation*}
and the two bias terms reduce to \(-\delta_j^2q_j/c_j^2\) since \(\bm b=\bm0\).
Adding the two gives the general specialization
\begin{equation*}
 \Delta_j(S)
 =\frac{q_j}{c_j^2}
 \Big(\sigma^2c_j+\sigma^2-\omega_j-\delta_j^2\Big).
\end{equation*}
Three cases follow by substitution. A candidate carrying only systematic
error (\(\omega_j=\sigma^2\)) contributes
\(\frac{q_j}{c_j^2}(\sigma^2c_j-\delta_j^2)\); a candidate carrying only the
excess noise of Corollary~\ref{cor:set-value}
(\(\delta_j=0,\ \omega_j=\sigma^2+\tau^2\)) contributes
\(\frac{q_j}{c_j^2}(\sigma^2c_j-\tau^2)\); and a candidate carrying both
(\(\delta_j\neq0,\ \omega_j=\sigma^2+\tau^2\)) contributes
\(\frac{q_j}{c_j^2}(\sigma^2c_j-\tau^2-\delta_j^2)\), mirroring the
information-gain/noise-cost/bias-cost tradeoff of
Corollary~\ref{cor:set-value} at the level of a single candidate. Setting
\(\delta_j=0\) and \(\omega_j=\sigma^2\) in the first case recovers
Equation~\eqref{eq:coverage-marginal}. \(\square\)

\section{Controlled Tests of the Linear Theory}
\label{app:exact-tests}

Figure~\ref{fig:exact-regime} uses fixed representations of real text and image
inputs paired with controlled targets.  Panel (a) compares the exact rank-one
marginal with before--after least-squares fits under shared existing-data noise
and fresh candidate noise.  Panel
(b) uses labeled probe data to select a pseudo-labeled-to-real ratio and
compares that choice with the held-out test optimum.

In both panels, \emph{real} identifies the source of an input, whereas
\emph{synthetic} identifies its training target.  Panel (a) assigns controlled
scalar targets to real text inputs; panel (b) assigns labels predicted by a
separate linear model to real images.  Qwen2.5 and ResNet-50 provide frozen
representations, and all downstream models are ordinary least squares.
Holding the candidate inputs fixed allows label error to change without also
changing the information provided by the inputs.

\subsection{Candidate value in text data}

\paragraph{Data and representation.}
We use the 8,000-example training portion of the \texttt{SetFit/subj}
subjectivity-classification dataset.  Qwen2.5-0.5B is frozen and used only as a
feature extractor: each text is truncated to 96 tokens, the final-layer hidden
states of its nonpadding tokens are averaged, and the resulting representation
has 896 coordinates.

For each of three random partitions, examples are assigned without replacement
to the five disjoint roles in Table~\ref{tab:text-partitions}. PCA is fitted
only on \(D_{\mathrm{rep}}\), retains 64 components, and is applied unchanged
to every other role; an appended intercept gives \(\bm x\in\mathbb R^{65}\).

\begin{table}[H]
\centering
\small
\caption{Data partitions for the text experiment.}
\label{tab:text-partitions}
\begin{tabular}{@{}lrp{.68\linewidth}@{}}
\toprule
Partition & Examples & Role \\
\midrule
\(D_{\mathrm{rep}}\) & 1,500 & Fit PCA and the reference linear target \\
\(D_{\mathrm{cal}}\) & 500 & Estimate the residual variance scale \\
\(D_r\) & 1,200 & Define the real training information \\
\(D_q\) & 1,000 & Estimate the target moment \(\widehat{\bm Q}\) \\
\(\mathcal P_s\) & 3,800 & Supply candidate inputs \\
\bottomrule
\end{tabular}
\end{table}

We map the binary labels to \(y^\pm=2y-1\), fit the reference linear target
\(\widehat{\bm\theta}^{\star}\) on \(D_{\mathrm{rep}}\), and use its residual
MSE on the disjoint \(D_{\mathrm{cal}}\) set to define
\(\widehat\sigma^2\). This scale includes both residual variation and linear
approximation error. Observed labels in the candidate pool are not used as
candidate targets.

\paragraph{Controlled target construction.}
The 1,200 real inputs are always present.  We sample a current set \(S\) of
size \(32,64,128,256,512,\) or \(768\), together with one candidate
\(j\notin S\), from the 3,800-input pool.  The real training targets follow
\(\bm y_r=\bm X_r\widehat{\bm\theta}^{\star}+\bm\varepsilon_r\), with
zero-mean variance \(\widehat\sigma^2\).

To construct systematic label error, we set the intercept coordinate of
\(\widehat{\bm\theta}^{\star}\) to zero and normalize the resulting direction
\(\bm g\) so that \(\bm g^\top\widehat{\bm Q}\bm g=1\).  For
\(a\in\{0.5,1,2,4,8\}\), define
\[
 \bm\gamma=a\sqrt{\widehat\sigma^2}\bm g,
 \qquad
 \delta_j=\bm x_j^\top\bm\gamma.
\]
Removing the intercept and normalizing under \(\widehat{\bm Q}\) makes the
systematic error feature-dependent, task-relevant, and comparable across
partitions.  The two matched contexts use
\[
 \bm d_S^{(-)}=-\bm X_S\bm\gamma,\quad
 \bm u_-=-\bm A_S\bm\gamma,
 \qquad
 \bm d_S^{(+)}=+\bm X_S\bm\gamma,\quad
 \bm u_+=+\bm A_S\bm\gamma.
\]
Their controlled training targets are
\(\bm y_S^{(\pm)}=\bm X_S\widehat{\bm\theta}^{\star}
+\bm d_S^{(\pm)}+\bm\varepsilon_S\) and
\(y_j=\bm x_j^\top\widehat{\bm\theta}^{\star}+\delta_j+\xi_j\).
A matched pair has the same \(\bm X_r,\bm X_S,\bm x_j,\delta_j\), and therefore
the same coverage score; only the sign of the error accumulated in \(S\)
changes.  The candidate can therefore cancel existing bias in the minus
context and reinforce it in the plus context, directly varying
\(e_j=\delta_j-\bm x_j^\top\bm A^{-1}\bm u\) in
Theorem~\ref{thm:rank-one-marginal}.

The matched contexts use the same random-noise variance (\(\tau^2=0\)), so
their difference comes only from systematic label error. For each partition,
set size, and error magnitude, we sample four input sets and evaluate both
signs, yielding 360 matched pairs.

\paragraph{Evaluation and results.}
For each context, the input-only comparator is the clean coverage marginal
\[
 s_{\mathrm{cov}}(j\mid S)
 =\widehat\sigma^2
 \frac{\bm p_j^\top\widehat{\bm Q}\bm p_j}
 {1+\bm x_j^\top\bm p_j},
 \qquad \bm p_j=\bm A^{-1}\bm x_j,
\]
and the error-aware score is Equation~\eqref{eq:rank-one-marginal}.
Independently of either score, we solve the least-squares model before and
after adding \(j\), using shared existing-data noise and fresh candidate noise,
and record the reduction in \(\widehat{\bm Q}\)-weighted risk.  Each context
averages 3,000 independent label-noise trials.

Coverage alone barely distinguishes beneficial from harmful candidates,
whereas the complete marginal predicts both the sign and magnitude of the
observed risk reduction (Table~\ref{tab:text-marginal-results}).

\begin{table}[H]
\centering
\small
\caption{Agreement with observed risk reductions in the text experiment.}
\label{tab:text-marginal-results}
\begin{tabular}{@{}llr@{}}
\toprule
Score & Metric & Value \\
\midrule
Coverage & AUROC & 0.519 \\
Complete marginal & AUROC & 0.99 \\
Complete marginal & Correlation & 0.99 \\
Complete marginal & Sign agreement & 99.2\% \\
\bottomrule
\end{tabular}
\end{table}

\subsection{Synthetic-data amount in image data}

\paragraph{Data and representation.}
We use CIFAR-10 images~\citep{krizhevsky2009cifar} represented by frozen
ResNet-50 penultimate-layer features~\citep{he2016resnet}.  Images are resized
to \(224\times224\), normalized with ImageNet channel statistics, and passed
through an ImageNet-1K-V2 pretrained ResNet-50, producing 2,048-dimensional
features.

For each of four random partitions, the 50,000 training images are assigned
without replacement to \(D_r\) (3,000 real-training images), \(D_q\) (3,000
labeled probe images), \(D_g\) (12,000 images used to fit label models),
\(\mathcal P_s\) (24,000 candidate images), and 8,000 unused images.  The
official 10,000-image test split remains separate.  PCA is fitted only on
\(D_r\), retains 128 components, and is applied unchanged to
\(D_q,D_g,\mathcal P_s\), and the test set.  At each amount, an unregularized
multiclass least-squares model is fitted jointly to the real and pseudo-labeled
examples.

\paragraph{Pseudo-label reliability.}
Each label model assigns its argmax class as a deterministic pseudo-label.
We vary reliability using five models fitted on \(D_g\): three use all
12,000 labeled examples or the first 400 or 150 examples, and two use all
12,000 examples after removing four or 12 task-relevant PCA coordinates.
Figure~\ref{fig:exact-regime}(b) displays the model fitted on all examples,
the model fitted on 400 examples, and the model with 12 coordinates removed.
No candidate or test labels are used to fit these models.
For the coordinate-removed condition, directions are ranked by explained
variance weighted by their reference-classifier coefficients, concentrating
errors in task-relevant directions.  Conditional on the fitted model,
pseudo-labels are deterministic, so their errors enter the systematic-error
term rather than adding random label noise.

For each pseudo-label model, the downstream model is fitted at
pseudo-labeled-to-real ratios
\[
 0,\ 0.05,\ 0.10,\ 0.20,\ 0.40,\ 0.80,\ 1.00,\ 1.50,\ 2.00,\ 3.00,\ 4.00,\ 6.00,\ 8.00.
\]
All conditions use the same candidate order, and larger amounts extend smaller
subsets.  The black cross in Figure~\ref{fig:exact-regime}(b) marks the
test-MSE-minimizing grid point, and the gray interval contains ratios within
1\% of that minimum.  The test split is used only for this retrospective
reference.

\paragraph{Probe-based risk estimate.}
The labeled probe set is divided into two disjoint halves.  Let
\(\widehat{\bm\beta}_a\) and \(\widehat{\bm\beta}_b\) be 128-by-10
least-squares class-score models fitted to the two 1,500-example halves.
The target moment \(\widehat{\bm Q}\) uses all 3,000 probe inputs, and
\(\widehat\sigma^2\) is the average per-coordinate residual MSE with the
\(n/(n-d)\) correction, where \(n=1{,}500\) and \(d=128\).  For a proposed
pseudo-labeled set, define
\[
 \bm B_S=\bm X_S^\top\bm Y_S,\qquad
 \bm U_a=\bm B_S-\bm A_S\widehat{\bm\beta}_a,\qquad
 \bm U_b=\bm B_S-\bm A_S\widehat{\bm\beta}_b.
\]
With \(\bm A=\bm A_r+\bm A_S\), the estimated systematic-error contribution is
\[
 \widehat R_{\mathrm{bias}}
 =\max\left\{0,
 \operatorname{tr}\!\left[
 (\bm A^{-1}\bm U_a)^\top
 \widehat{\bm Q}
 (\bm A^{-1}\bm U_b)
 \right]\right\},
\]
and the random-noise contribution is
\[
 \widehat R_{\mathrm{var}}
 =10\widehat\sigma^2
 \operatorname{tr}
 (\widehat{\bm Q}\bm A^{-1}\bm A_r\bm A^{-1}).
\]
The cross product avoids the positive inflation from squaring a single noisy
probe estimate; truncation enforces nonnegativity.
The variance term assumes isotropic residual variation in the trusted real
labels; deterministic pseudo-labels add no conditional random-noise
covariance.  We select the ratio minimizing
\(\widehat R_{\mathrm{bias}}+\widehat R_{\mathrm{var}}\).  Dropping the bias
term gives the input-only comparator, which selects the full 24,000-candidate
pool.

\paragraph{Amount-selection results.}
The probe estimate tracks the decrease in useful data amount as label
reliability falls (Table~\ref{tab:image-amount-results}). Across all five
conditions, predicted and test-optimal amounts have the same ordering
(Kendall \(\tau=1.00\)), and every prediction across the four partitions
falls within the 1\% test-optimal plateau. For the most reliable label model,
the test optimum on the evaluated grid occurs at the largest ratio.

\begin{table}[H]
\centering
\small
\caption{Mean label accuracy and ratios of synthetic to real data for the
three conditions in Figure~\ref{fig:exact-regime}(b).}
\label{tab:image-amount-results}
\begin{tabular}{@{}lrrr@{}}
\toprule
Label model & Label accuracy & Predicted ratio & Test optimum ratio \\
\midrule
12,000 training examples & 0.889 & 7.0 & 8.0 \\
400 training examples & 0.836 & 0.90 & 0.95 \\
12 coordinates removed & 0.604 & 0 & 0 \\
\bottomrule
\end{tabular}
\end{table}

This experiment evaluates amount selection when multiple feature directions
contribute, complementing the characterization of a single direction in
Corollary~\ref{cor:optimal-amount}.

\section{Baseline Implementations}
\label{app:baselines}

\subsection{Representations and scores used by the selectors}

\paragraph{Prompt features.}
For every real, probe, and candidate prompt, we average the
1,536-dimensional final-layer hidden states across prompt tokens to obtain one
prompt representation. Principal-component analysis is fitted on the
real-training representations and projects all partitions to 64 dimensions.
The same projection is used by Embedding Vendi, Covariance Matching,
TATC (Coverage), and the coverage stage of TATC.

\paragraph{Gradient features.}
For every complete example, we compute the gradient of the response loss with
respect to the trainable LoRA parameters. To make per-example storage and
matrix operations feasible, the gradient is mapped by a signed CountSketch to
512 dimensions and then projected to 64 dimensions by principal-component
analysis fitted on the real-training gradients. G-Vendi and BAIT use these
projected gradient features. LESS uses the 512-dimensional sketched gradients
before PCA.

\paragraph{Quality score.}
Candidate negative log-likelihood is the mean token loss of its response under
the checkpoint trained on real data, excluding prompt tokens. Lower values
indicate responses that the checkpoint considers more likely.

\subsection{Selection rules}

\paragraph{Random.}
Random samples candidate indices uniformly without replacement.

\paragraph{NLL.}
NLL sorts candidates by increasing response negative log-likelihood and
returns the first \(B\).

\paragraph{Embedding Vendi.}
Let \(\bm z_j\) be the unit-normalized prompt feature. For a nonempty set \(S\),
the Vendi score is
\begin{equation*}
 \operatorname{VS}(S)
 =\exp\!\left[-\sum_k\lambda_k\log\lambda_k\right],
 \qquad
 \{\lambda_k\}_k
 =\operatorname{eig}\!\left(
 \frac{1}{|S|}\sum_{j\in S}\bm z_j\bm z_j^\top
 \right),
\end{equation*}
where zero eigenvalues make zero contribution. A farthest-first traversal
initializes a set of size \(B\). We then run 32 one-swap rounds; each round
tests 256 proposed replacements and accepts the proposal with the largest
positive Vendi-score improvement.

\paragraph{G-Vendi.}
G-Vendi applies the same objective and swap procedure to unit-normalized
projected LoRA-gradient features. It replaces prompt representations with
candidate-induced training gradients while retaining the same set optimization.

\paragraph{LESS.}
Let \(\bm g_j\) be the candidate gradient feature and let
\(\overline{\bm g}_q\) be the mean gradient on the labeled real probe set. The score
is
\begin{equation*}
 s_{\mathrm{LESS}}(j)
 =\frac{\bm g_j^\top\overline{\bm g}_q}
 {\|\bm g_j\|_2\|\overline{\bm g}_q\|_2}.
\end{equation*}
Candidates are ranked by this target-gradient alignment score at the shared
checkpoint.

\paragraph{OptimSyn.}
For the fixed candidate pool, we adapt OptimSyn's optimizer-aware influence
score. For each candidate, we form the AdamW update difference
\(\Delta\bm\theta_j\) described in Appendix~\ref{app:optimizer} and score its
first-order effect on the real probe:
\begin{equation*}
 s_{\mathrm{OptimSyn}}(j)
 =-\frac{\bm g_q^\top\Delta\bm\theta_j}
 {\|\Delta\bm\theta_j\|_2}.
\end{equation*}

\paragraph{Covariance Matching.}
Normalize every prompt feature to unit Euclidean norm and define the
real-training second moment
\begin{equation*}
 \bm C_r=\frac{1}{|\mathcal D_r|}
 \sum_{x\in\mathcal D_r}\bm z_x\bm z_x^\top.
\end{equation*}
Starting from an empty set, the method greedily adds the candidate that minimizes
\begin{equation*}
 \left\|
 \frac{1}{|S|+1}
 \left(\sum_{i\in S}\bm z_i\bm z_i^\top
       +\bm z_j\bm z_j^\top\right)
 -\bm C_r
 \right\|_F^2.
\end{equation*}
This baseline matches the second moment of the real-training inputs and does
not use labeled probe responses.

\paragraph{BAIT.}
BAIT operates on the 64-dimensional projected gradient features. It initializes
its information matrix with the real-training gradients and uses the candidate
gradient distribution as the A-optimal target. A diagonal stabilizer equal to
\(10^{-3}\) times the average information-matrix diagonal is added before
inversion. Following the forward--backward construction, BAIT greedily grows
to \(2B\) candidates and then removes examples until \(B\) remain, choosing each
addition or removal by its A-optimal trace change.

\paragraph{TATC (Coverage).}
This ablation initializes \(\widehat{\bm A}_0\) and \(\widehat{\bm Q}\) exactly
as TATC but removes the training-effect filter. At every step it searches
the full remaining pool and adds
\begin{equation*}
 \arg\max_{j\notin S}
 \frac{\bm\phi_j^\top\widehat{\bm A}_S^{-1}
 \widehat{\bm Q}\widehat{\bm A}_S^{-1}\bm\phi_j}
 {1+\bm\phi_j^\top\widehat{\bm A}_S^{-1}\bm\phi_j}.
\end{equation*}
It isolates the contribution of the response-aware training-effect component
beyond target coverage.

\section{Implementation of TATC in Supervised Fine-tuning}
\label{app:math-details}

\subsection{From the Linear Analysis to TATC}
\label{app:nonlinear}

The linear results motivate two components of selection: the target coverage added by a candidate input and the training score of its complete prompt and response. TATC computes both at a checkpoint trained on real data. It ranks candidates using a quadratic score of their update differences and updates coverage as examples are selected.

For each prompt, we average the hidden states from the model's final transformer
layer over prompt tokens to obtain a 1,536-dimensional representation.  PCA is
fitted only to the 2,048 real-training representations and then applied,
without refitting, to the probe, holdout, and candidate prompts.  Let
\(\bm\phi_j\in\mathbb R^{64}\) be the resulting candidate feature and define
\begin{align*}
 \widehat{\bm A}_0
 &=\rho\bm I+
   \sum_{x\in\mathcal D_r}\bm\phi(x)\bm\phi(x)^\top,\\
 \widehat{\bm Q}
 &=\frac{1}{|\mathcal D_q|}
   \sum_{x\in\mathcal D_q}\bm\phi(x)\bm\phi(x)^\top,\\
 \widehat{\bm A}_S
 &=\widehat{\bm A}_0+
   \sum_{i\in S}\bm\phi_i\bm\phi_i^\top.
\end{align*}
The numerical stabilizer is
\begin{equation*}
 \rho=10^{-3}\frac{\operatorname{tr}
 (\sum_{x\in\mathcal D_r}\bm\phi(x)\bm\phi(x)^\top)}{64}.
\end{equation*}
The target coverage gain
\begin{equation*}
 a_j(S)=
 \frac{\bm\phi_j^\top\widehat{\bm A}_S^{-1}
 \widehat{\bm Q}\widehat{\bm A}_S^{-1}\bm\phi_j}
 {1+\bm\phi_j^\top\widehat{\bm A}_S^{-1}\bm\phi_j}
\end{equation*}
is exactly the Sherman--Morrison decrease in
\(\operatorname{tr}(\widehat{\bm Q}\widehat{\bm A}_S^{-1})\) within the fixed
prompt representation.  This algebraic identity is exact; using the criterion
as a proxy for downstream performance after LLM fine-tuning is an
experimental-design approximation.

The quadratic training score is
\begin{equation*}
 h_j=-\bm g_q^\top\Delta\bm\theta_j
 -\tfrac12\Delta\bm\theta_j^\top\bm H_q\Delta\bm\theta_j,
\end{equation*}
In Equation~\eqref{eq:local-value}, \(\Delta\bm\theta_j=\bm U_j-\bm U_{\varnothing}\) is the difference between the candidate and zero gradient AdamW displacements from the same checkpoint and saved optimizer state. The probe gradient \(\bm g_q\) measures alignment with decreasing loss, while the diagonal matrix \(\bm H_q\) supplies a quadratic penalty on displacement. The score provides a fixed ranking for the expanding shortlist. Appendix~\ref{app:optimizer} defines the quadratic model and its computation.

Table~\ref{tab:linear-tatc-map} summarizes the correspondence and its
limitations.

\begin{table}[H]
\centering
\small
\setlength{\tabcolsep}{3pt}
\caption{Correspondence between the linear analysis and TATC.}
\label{tab:linear-tatc-map}
\begin{tabular}{@{}
>{\raggedright\arraybackslash}p{.18\textwidth}
>{\raggedright\arraybackslash}p{.25\textwidth}
>{\raggedright\arraybackslash}p{.18\textwidth}
>{\raggedright\arraybackslash}p{.30\textwidth}@{}}
\toprule
Linear quantity or role & TATC quantity & Status & Correct interpretation \\
\midrule
Real information \(\bm A_r\) & \(\widehat{\bm A}_0\) & Approximation & Summarizes prompt directions already represented by real data; \(\rho\bm I\) is numerical stabilization. \\
Target moment \(\bm Q\) & \(\widehat{\bm Q}\) & Sample estimate & Estimates target relevance only in the checkpoint prompt representation. \\
Selected information \(\bm A_S\) & \(\sum_{i\in S}\bm\phi_i\bm\phi_i^\top\) & Exact in representation & Each selected input adds a rank-one prompt-feature contribution. \\
Clean coverage marginal \(q_j/c_j\) & \(a_j(S)\) & Exact in representation & Exact decrease in the downstream-weighted inverse-trace criterion. \\
Systematic label error \((\bm d_S,\bm u)\) & Response-dependent gradient and AdamW update & Approximation & For squared loss, response error changes the gradient by \(- (\bm x^\top\bm\delta)\bm x\); in an LLM the full response changes a nonlinear gradient. \\
Exact marginal value \(\Delta_j(S)\) & Quadratic training score \(h_j\) & Surrogate & Ranks update differences using a quadratic model of probe loss at the shared checkpoint. \\
Residual interaction \(e_j\) with current bias & No explicit scalar counterpart & Omitted interaction & The response score is computed once, while set dependence enters through \(a_j(S)\). \\
\bottomrule
\end{tabular}
\end{table}

\subsection{Quadratic training score}
\label{app:optimizer}

Let \(\bm g_j\) be the per-example gradient of the response-token loss and let
\(b_{\mathrm{eff}}=16\) be the effective batch size used to scale the candidate
gradient.  We insert \(\bm g_j/b_{\mathrm{eff}}\) into the saved AdamW state
after fine-tuning on real data.  For coordinate \(k\), with saved moments
\((m_{0,k},v_{0,k})\), the candidate and zero-gradient counterfactual moments
are
\begin{align*}
 m_{j,k}&=\beta_1m_{0,k}+(1-\beta_1)g_{j,k}/b_{\mathrm{eff}},
 &v_{j,k}&=\beta_2v_{0,k}+(1-\beta_2)(g_{j,k}/b_{\mathrm{eff}})^2,\\
 m_{\varnothing,k}&=\beta_1m_{0,k},
 &v_{\varnothing,k}&=\beta_2v_{0,k}.
\end{align*}
After the usual AdamW bias corrections, let
\begin{equation*}
 U_k(m,v)=-\alpha\frac{\widehat m_k}{\sqrt{\widehat v_k}+\epsilon}-\alpha\lambda\theta_{0,k}
\end{equation*}
denote the full parameter displacement at \(\bm\theta_0\), where \(\alpha\) is the learning rate and \(\lambda\) is the weight decay coefficient.  Let \(\bm U_j\) and \(\bm U_{\varnothing}\)
collect these coordinatewise updates under the candidate and zero-gradient
moments.  Their difference is
\begin{equation*}
 \Delta\theta_{j,k}=U_k(m_j,v_j)-U_k(m_{\varnothing},v_{\varnothing}).
\end{equation*}
The common weight decay displacement \(-\alpha\lambda\bm\theta_0\) cancels in \(\Delta\bm\theta_j\). The saved optimizer moments still affect this difference through the adaptive updates.

We compute the probe statistics using batches of four examples. Let \(\{\mathcal B_b\}_{b=1}^{M}\) partition the probe examples across all scoring workers, let \(n_b=|\mathcal B_b|\), and let \(\ell_b(\bm\theta)\) be the model's response token loss for batch \(\mathcal B_b\). Set \(w_b=n_b/|\mathcal D_q|\) and define
\begin{align*}
 L_q(\bm\theta)&=\sum_{b=1}^{M}w_b\ell_b(\bm\theta),
 &\bm g_{q,b}&=\nabla_{\bm\theta}\ell_b(\bm\theta_0),\\
 \bm g_q&=\sum_{b=1}^{M}w_b\bm g_{q,b},
 &[\bm H_q]_{kk}&=\sum_{b=1}^{M}w_b[\bm g_{q,b}]_k^2.
\end{align*}
The implementation squares each batch gradient before averaging across batches. We use the resulting positive semidefinite diagonal matrix as a curvature surrogate in the scoring model~\citep{kunstner2019empiricalfisher}.

For a displacement \(\bm d\) from \(\bm\theta_0\), define the quadratic model
\begin{equation*}
 \widetilde L_q(\bm d)
 =L_q(\bm\theta_0)+\bm g_q^\top\bm d
 +\tfrac12\bm d^\top\bm H_q\bm d.
\end{equation*}
Evaluating this model along the update difference gives
\begin{align*}
 h_j
 &=\widetilde L_q(\bm 0)-\widetilde L_q(\Delta\bm\theta_j)\\
 &=-\bm g_q^\top\Delta\bm\theta_j
 -\tfrac12\Delta\bm\theta_j^\top\bm H_q\Delta\bm\theta_j.
\end{align*}
The first term measures alignment of the update difference with decreasing probe loss. The second is a nonnegative penalty on displacement, since \(\bm H_q\succeq0\). The quadratic model scores \(\bm\theta_0+\Delta\bm\theta_j\); the full AdamW step reaches \(\bm\theta_0+\bm U_j\). The proposition below bounds the loss change along \(\Delta\bm\theta_j\).

Both probe statistics and candidate update differences are computed once at the shared checkpoint. Each candidate therefore has one fixed score during selection. The implementation accumulates both terms directly from the update difference and the probe statistics. CountSketch is used only for gradient features stored for the baselines.

\paragraph{Local justification of the score.}
The displacement \(\Delta\bm\theta_j\) used for scoring is the update difference defined above. The following result relates the score in Equation~\eqref{eq:local-value} to the change in probe loss along this displacement and separates the error in the curvature matrix from the Taylor remainder.

\begin{proposition}[Local loss approximation]
\label{prop:local-score-accuracy}
Suppose \(L_q\) is twice continuously differentiable in a neighborhood of the segment from \(\bm\theta_0\) to \(\bm\theta_0+\Delta\bm\theta_j\), and its Hessian is \(M\)-Lipschitz there in spectral norm. Let
\[
 \bm H_*=\nabla^2 L_q(\bm\theta_0),\qquad
 \epsilon_H=\|\bm H_*-\bm H_q\|_2,
\]
and use \(\bm g_q=\nabla L_q(\bm\theta_0)\) in the score \(h_j\). Then
\begin{equation}
 \left|L_q(\bm\theta_0)-L_q(\bm\theta_0+\Delta\bm\theta_j)-h_j\right|
 \leq
 \frac{\epsilon_H}{2}\|\Delta\bm\theta_j\|_2^2
 +\frac{M}{6}\|\Delta\bm\theta_j\|_2^3.
\label{eq:local-score-error}
\end{equation}
In particular, the displacement reduces probe loss whenever \(h_j\) exceeds the bound in Equation~\eqref{eq:local-score-error}.
\end{proposition}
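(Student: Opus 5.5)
The plan is to compare the true loss change with the exact second-order Taylor expansion of \(L_q\) along the segment, and then replace the true Hessian \(\bm H_*\) by the surrogate \(\bm H_q\). Write \(\bm d=\Delta\bm\theta_j\) and \(\varphi(t)=L_q(\bm\theta_0+t\bm d)\) for \(t\in[0,1]\). By hypothesis \(\varphi\) is \(C^2\), with
\[
\varphi'(t)=\nabla L_q(\bm\theta_0+t\bm d)^\top\bm d,
\qquad
\varphi''(t)=\bm d^\top\nabla^2L_q(\bm\theta_0+t\bm d)\bm d .
\]
Taylor's theorem with integral remainder gives
\[
\varphi(1)=\varphi(0)+\varphi'(0)+\int_0^1(1-t)\varphi''(t)\,dt .
\]
Using \(\int_0^1(1-t)\,dt=\tfrac12\) and \(\varphi'(0)=\bm g_q^\top\bm d\), this becomes
\[
L_q(\bm\theta_0+\bm d)=L_q(\bm\theta_0)+\bm g_q^\top\bm d+\tfrac12\bm d^\top\bm H_*\bm d+r,
\qquad
r=\int_0^1(1-t)\,\bm d^\top\big(\nabla^2L_q(\bm\theta_0+t\bm d)-\bm H_*\big)\bm d\,dt .
\]

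Substituting the definition of \(h_j\) from Equation~\eqref{eq:local-value} then gives the exact identity
\[
L_q(\bm\theta_0)-L_q(\bm\theta_0+\bm d)-h_j=-\tfrac12\bm d^\top(\bm H_*-\bm H_q)\bm d-r .
\]
The triangle inequality splits the error into two pieces. For the first, the spectral norm bounds the quadratic form: \(|\bm d^\top(\bm H_*-\bm H_q)\bm d|\le\epsilon_H\|\bm d\|_2^2\), which yields the \(\tfrac{\epsilon_H}{2}\|\bm d\|_2^2\) term.

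For the remainder, the \(M\)-Lipschitz Hessian on the segment gives
\[
\|\nabla^2L_q(\bm\theta_0+t\bm d)-\bm H_*\|_2\le Mt\|\bm d\|_2 .
\]
Hence \(|r|\le M\|\bm d\|_2^3\int_0^1 t(1-t)\,dt=\tfrac{M}{6}\|\bm d\|_2^3\). Combining the two pieces proves Equation~\eqref{eq:local-score-error}.

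The final claim follows directly. If \(h_j\) exceeds the right-hand side \(E\), then
\[
L_q(\bm\theta_0)-L_q(\bm\theta_0+\bm d)\ge h_j-E>0,
\]
so the displacement reduces probe loss.

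The only step needing care is the remainder constant. The integral form with weight \((1-t)\) combined with the Lipschitz bound \(Mt\|\bm d\|_2\) is what produces the factor \(1/6\). A cruder Lagrange-form argument would give \(1/2\) instead. I would also note that the neighborhood hypothesis is needed only to justify differentiating under the segment parametrization.
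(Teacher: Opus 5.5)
Your proposal is correct and follows the paper's proof. You use the same second-order Taylor expansion about \(\bm\theta_0\) with \(\bm H_*\), the same exact identity \(L_q(\bm\theta_0)-L_q(\bm\theta_0+\bm d)-h_j=-\tfrac12\bm d^\top(\bm H_*-\bm H_q)\bm d-r\), and the same spectral-norm bound on the curvature error; the only difference is that you derive the \(\tfrac{M}{6}\|\bm d\|_2^3\) remainder explicitly from the integral form with weight \(1-t\), whereas the paper simply cites Taylor's theorem with a Lipschitz Hessian for it.
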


\begin{proof}
Write \(\bm d=\Delta\bm\theta_j\). Taylor's theorem with a Lipschitz Hessian gives
\[
 L_q(\bm\theta_0+\bm d)
 =L_q(\bm\theta_0)+\bm g_q^\top\bm d
 +\tfrac12\bm d^\top\bm H_*\bm d+R(\bm d),
 \qquad |R(\bm d)|\leq\tfrac{M}{6}\|\bm d\|_2^3.
\]
Subtracting the definition of \(h_j\) from the loss reduction yields
\[
 L_q(\bm\theta_0)-L_q(\bm\theta_0+\bm d)-h_j
 =-\tfrac12\bm d^\top(\bm H_*-\bm H_q)\bm d-R(\bm d).
\]
The bound follows from
\( |\bm d^\top(\bm H_*-\bm H_q)\bm d|\leq\epsilon_H\|\bm d\|_2^2 \).
\end{proof}

The quadratic model therefore gives a local scoring rule whose accuracy depends on both the displacement size and the curvature approximation. The first term in the bound measures the error from the curvature surrogate supplied by the batch gradient construction; the cubic term controls the Taylor remainder.

\subsection{Sequential TATC selection}
\label{app:algorithm}

We implement Algorithm~\ref{alg:tatc} with \(\kappa=2\) and update
\(\widehat{\bm A}_S^{-1}\) after each selection using Sherman--Morrison:
\begin{equation*}
 \widehat{\bm A}_{S\cup\{j\}}^{-1}
 =\widehat{\bm A}_S^{-1}
 -\frac{\widehat{\bm A}_S^{-1}\bm\phi_j\bm\phi_j^\top
        \widehat{\bm A}_S^{-1}}
 {1+\bm\phi_j^\top\widehat{\bm A}_S^{-1}\bm\phi_j}.
\end{equation*}
One run produces an ordering whose prefixes are the selected sets for every
budget.  Each prefix is then trained independently from the same checkpoint
trained on real data and the same
saved optimizer state; training does not continue from the preceding budget.

\subsection{GSM8K protocol}

\paragraph{Data.}

The task is GSM8K mathematical reasoning with
Qwen2.5-Math-1.5B-Instruct.  The disjoint real partitions contain 2,048
examples used to train the shared checkpoint, 512 labeled examples for the
target probe, and 1,024 examples for the holdout set.  The synthetic pool contains 4,096
OpenR1-Math-220k trajectories after deduplication to at most one trajectory per
source problem.  Of these trajectories, 2,662 pass the provided verifier and
1,434 do not; TATC never uses verifier status.

\paragraph{Model and training.}
LoRA adapters are attached to the attention \(q,k,v,o\) projections and the
MLP gate, up, and down projections.  Rank is 16, scaling is 32, and adapter
dropout is zero.  Fine-tuning on real data uses two epochs, learning rate
\(2\times10^{-4}\), AdamW weight decay \(.01\), sequence length 2,304, and
effective batch size 16.  Every selected subset is trained for one epoch with
the same learning rate, weight decay, effective batch size, and initial model
and optimizer state. We evaluate budgets \(B\in\{64,128,256,512\}\) using
nested prefixes of the selected ordering. Results are the mean and standard
deviation over three independent training repetitions.

\paragraph{Evaluation.}
Exact match is computed after extracting the final numeric answer. Outputs
that do not terminate within the generation allowance count as incorrect.
The \(B=128\) comparison allows 1,536 generated tokens, whereas the budget
sweep allows 3,072; absolute exact-match values are therefore not comparable
across panels.

\end{document}